\documentclass[runningheads]{llncs}
\usepackage{graphicx}
\usepackage{float}
\usepackage{amsmath,amssymb}
\usepackage[ruled,vlined]{algorithm2e}
\usepackage{booktabs}
\usepackage{multirow}
\usepackage{array}
\usepackage{xcolor}
\usepackage[most]{tcolorbox}
\usepackage{hyperref}

\newcommand{\BX}{\mathbf{X}}
\newcommand{\BY}{\mathbf{Y}}

\begin{document}

\title{GRASP: Gradient-Aligned Sequential Parameter Transfer for Memory-Efficient Multi-Source Learning}
\titlerunning{GRASP: Memory-Efficient Multi-Source Transfer}
\author{Mary Isabelle Wisell\inst{1} \and Nicholas Jacobs\inst{2} \and Aayush Manandhar\inst{3} \and Salimeh Yasaei Sekeh\inst{1}}
\authorrunning{M. I. Wisell et al.}
\institute{San Diego State University, San Diego, CA, USA \and University of Utah, Salt Lake City, UT, USA \and University of Maine, Orono, ME, USA}
\maketitle

\begin{abstract}
Multi-source transfer learning faces a fundamental scalability bottleneck: existing approaches require either loading all $K$ source models into memory simultaneously during parameter fusion, requiring $O(K)$ memory, or deploying all models at inference time, making production deployment infeasible. We propose GRASP (Gradient-Aligned Sequential Parameter Transfer), which achieves superior knowledge integration while maintaining $O(1)$ memory consumption through three key innovations: (1) sequential processing that merges one source at a time into an evolving target model, (2) parameter-wise gradient alignment that selectively transfers only parameters whose optimization directions align with the target domain, avoiding negative transfer, and (3) iterative fine-tuning that adapts transferred knowledge before integrating the next source. Extensive experiments across three continual learning benchmarks (Yearbook, CLEAR-10, CLEAR-100) spanning 10 to 108-year temporal distribution shifts and four architectures (1.3M to 25.6M parameters) demonstrate that GRASP achieves 93.5\% mean accuracy over all datasets and architectures compared to ensemble method's 71.7\% accuracy while requiring only constant memory versus $K$ models for standard multi-source fusion. Critically, GRASP's sequential design enables incremental source integration without re-processing previously merged models and scales to arbitrarily many sources without memory growth, making it uniquely suitable for resource-constrained deployment and continually evolving source domains.

\textbf{Keywords:} Transfer Learning, Multi-Source Learning, Parameter-Efficient Methods, Continual Learning, Memory-Efficient Deep Learning
\end{abstract}

\section{Introduction}

Transfer learning has revolutionized deep learning by enabling rapid adaptation to new domains through knowledge reuse from pre-trained models~\cite{pan2010survey,weiss2016survey}. While single-source transfer has proven highly effective, real-world applications increasingly involve multiple heterogeneous source domains, each capturing complementary knowledge relevant to the target task. Multi-source transfer learning promises to harness this diverse knowledge for superior target performance; yet, existing approaches face fundamental trade-offs that severely limit their practical deployment.

\textbf{The memory scalability problem:} Current multi-source methods fall into three categories, each with critical limitations. \textit{Ensemble methods}~\cite{dietterich2000ensemble,lakshminarayanan2017simple} maintain $K$ independently trained source models and combine their predictions through weighted averaging. While conceptually simple and embarrassingly parallel, ensembles require $O(K)$ space complexity, making deployment infeasible in memory-constrained environments. \textit{Parameter fusion methods}~\cite{wortsman2022model,matena2022merging,ilharco2023editing} merge source parameters into a single model, eliminating inference overhead but typically requiring all sources loaded simultaneously during merging, creating an $O(K)$ memory bottleneck that prevents scaling to large source collections. \textit{Parameter-efficient methods}~\cite{pfeiffer2021adapterfusion,razdaibiedina2023progressive} train lightweight adapters for each source, but these adapters accumulate across sources and suffer from catastrophic forgetting~\cite{kirkpatrick2017overcoming}.

\textbf{Our approach (GRASP):} We propose Gradient-Aligned Sequential Parameter Transfer (GRASP), which resolves these fundamental limitations through: \textbf{(1) Sequential processing with constant memory:} processing sources one at a time with $O(1)$ memory complexity (only 2 models in memory); \textbf{(2) Gradient-aligned parameter selection:} selectively transferring only parameters whose gradients exhibit positive cosine similarity with target gradients; \textbf{(3) Iterative integration with adaptation:} fine-tuning after each source merge to ensure compatibility with subsequently transferred knowledge.

\textbf{Contributions:} (1) A memory-efficient sequential transfer framework achieving $O(1)$ memory complexity regardless of source count. (2) A gradient alignment criterion providing principled parameter-level selection that identifies beneficial knowledge while avoiding negative transfer. (3) Theoretical analysis establishing bounds on multi-source transfer effectiveness through Fisher Information formulation. (4) Comprehensive empirical validation demonstrating superior accuracy, stability, and memory efficiency across diverse temporal shifts. \textbf{Code:} Available at \url{https://github.com/Sekeh-Lab/grasp-multisource-transfer}.

\section{Related Work}

\textbf{Multi-Source Transfer and Fusion.}
Multi-source domain adaptation leverages multiple sources to improve target performance~\cite{mansour2009domain}. Recent fusion methods combine independently trained models without retraining: model soups~\cite{wortsman2022model} average weights from models with different hyperparameters, Fisher-weighted averaging~\cite{matena2022merging} weights parameters by Fisher information importance, and task arithmetic~\cite{ilharco2023editing} demonstrates that task vectors can be added or negated. While these methods achieve strong performance, they require loading all $K$ sources simultaneously during merging ($O(K)$ memory), necessitate re-merging when adding sources, and employ uniform or coarse-grained weighting schemes.

\textbf{Ensemble Methods.}
Ensembles combine models through voting or averaging~\cite{dietterich2000ensemble}, with deep ensembles~\cite{lakshminarayanan2017simple} providing uncertainty estimates at the cost of maintaining $K$ models in memory and executing $K$ forward passes. Both approaches suffer from linear memory and computational scaling without support for continuous extension.

\textbf{Gradient-Based Transfer.}
Gradient alignment has emerged as an indicator of successful transfer. Du et al.~\cite{du2021gradient} proposed gradient distribution alignment for domain adaptation, while Wang et al.~\cite{wang2024enhancing} introduced prompt gradient alignment for domain-level decisions. Standley et al.~\cite{standley2020which} used gradient cosine similarity for task affinity in multi-task learning. However, prior work applies gradient alignment at the domain or task level rather than for parameter-level selective transfer.

\textbf{Parameter-Efficient and Sequential Methods.}
Parameter-efficient fine-tuning reduces trainable parameters through low-rank decomposition~\cite{hu2021lora}. Sequential methods include AdapterFusion~\cite{pfeiffer2021adapterfusion}, which composes adapters with learned weights, and sequential adapter training~\cite{razdaibiedina2023progressive}, which trains task-specific adapters sequentially but accumulates $K$ adapter modules. Continual learning prevents catastrophic forgetting through regularization~\cite{kirkpatrick2017overcoming} or dynamic architectures~\cite{rusu2016progressive}, but typically requires architectural growth or specialized memory mechanisms.

\textbf{GRASP Distinctions.}
Unlike fusion methods requiring $O(K)$ memory and re-merging, GRASP achieves $O(1)$ memory through sequential processing and continuous integration. Compared to ensembles, GRASP provides single-model inference with selective knowledge aggregation. GRASP extends gradient alignment from domain-level~\cite{du2021gradient,wang2024enhancing} to parameter-level selection, enabling fine-grained transfer control. Unlike sequential adapter methods~\cite{razdaibiedina2023progressive} and continual learning approaches~\cite{rusu2016progressive}, GRASP integrates knowledge into a single model without adapter accumulation or architectural growth.

\section{Methodology and Theoretical Analysis}
\subsection{Problem Formulation}\label{sec3.1}

We consider a sequence of $M$ sources $\{(\BX_m,\BY_m)\}_{m=1,\ldots,M}$, where $\BX_m$ is the domain of source $m$ and $\BY_m$ are class sets of the $m$-th source. The target is denoted $T$, $\{(\BX_T,\BY_T)\}$, with class set $\BY_T$.

\begin{definition}[\textbf{Multi-source TL (MS-TL)}]
For any ground event $\mathcal{D}$ from $M$ sources denoted by $\mathcal{D}^u_M$, the goal of MS-TL is to learn $P(x\in \BX_{T}|\mathcal{D}^u_M)$. We assume source domains are disjoint, i.e., $\BX_m\bigcap \BX_{m'}=\emptyset$, $\forall m\neq m'$ and $\mathcal{D}_M^u=\bigcup_{m=1}^M \BX_m$ and 
\begin{equation}\label{Def:MS-TL}
P(x\in \BX_{T}|\mathcal{D}^u_M)= \sum_{m=1}^M P(x\in \BX_{T}|x\in\BX_m) P(x\in\BX_m).
\end{equation}
Because $\BX_m\bigcap \BX_{m'}=\emptyset$, the definition for a particular source $\BX_m$ is:
\begin{equation}\label{Def:MS-TL-1}
P(x\in \BX_{T}|x\in\BX_m) P(x\in\BX_m), 
\end{equation}
where source prediction (SP) probability is $P(x\in\BX_m)$ and $m$-th source transfer prediction ($m$-STP) probability is $P(x\in \BX_{T}|x\in\BX_m)$.
\end{definition}

\begin{definition}[\textbf{Ensemble TL (E-TL)}]
For a set of sources $\{\BX_m\}_{m=1,\ldots,M}$ (i.e., $\mathcal{D}^e_M= \{\BX_m\}_{m=1}^M$), E-TL learns target $\BX_T$ given $\mathcal{D}^e_M$:
\begin{equation}\label{Def:E-TL}
P(x\in \BX_{T}|\mathcal{D}^e_M)=\sum\limits_{m=1}^M \alpha_m P(x\in \BX_{T}|x\in\BX_m),
\end{equation}
where $\alpha_m\in(0,1)$ is the ensemble weight of source $\BX_m$. The probability $P(x\in \BX_{T}|x\in\BX_m)$ is the $m$-th STP probability.
\end{definition}

\textbf{Remark:} If we set $\alpha_m=P(x\in\BX_m)$, E-TL implies MS-TL.

\begin{definition}[\textbf{Sequential TL (S-TL)}]
For a sequence of sources $\{(\BX_m,\BY_m)\}_{m=1,\ldots,M}$ where $\mathcal{D}^s_M$ is the ground event with sequence $\mathcal{D}^s_M= \BX_1\rightarrow \BX_2\rightarrow \ldots \BX_M$:
\begin{equation}\label{Def:S-TL}
P(x\in \BX_{T}|\mathcal{D}^s_M)= P(x\in \BX_{T}|x\in\BX_{M}).
\end{equation}
\end{definition}

\begin{lemma}\label{lemma1}
Suppose source prediction probabilities are bounded by $\gamma_m$, $m=1,\ldots,M$, i.e., $P(x\in\BX_m)\leq \gamma_m$. Then E-TL and MS-TL prediction for target $\BX_T$ is bounded by:
\begin{align}
\left| P(x\in \BX_{T}|\mathcal{D}^e_M) - P(x\in \BX_{T}|\mathcal{D}^u_M)\right| 
\leq \sum\limits_{m=1}^M \beta_m P(x\in \BX_{T}|x\in\BX_m),
\end{align}
where $\beta_m$, $m=1,\ldots M$, are constants.
\end{lemma}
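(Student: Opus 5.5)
The plan is to subtract the two defining expressions term by term and bound each term by the triangle inequality. By Definition of E-TL, $P(x\in\BX_T|\mathcal{D}^e_M)=\sum_{m=1}^M \alpha_m P(x\in\BX_T|x\in\BX_m)$. By the MS-TL definition, $P(x\in\BX_T|\mathcal{D}^u_M)=\sum_{m=1}^M P(x\in\BX_T|x\in\BX_m)P(x\in\BX_m)$. Both are sums over the same index set against the same $m$-STP probabilities, so their difference is
\begin{equation*}
\sum_{m=1}^M \bigl(\alpha_m - P(x\in\BX_m)\bigr)\, P(x\in\BX_T|x\in\BX_m).
\end{equation*}

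Next I take absolute values. Each $m$-STP probability is nonnegative, so the triangle inequality gives
\begin{equation*}
\left| P(x\in\BX_T|\mathcal{D}^e_M) - P(x\in\BX_T|\mathcal{D}^u_M)\right| \leq \sum_{m=1}^M \bigl|\alpha_m - P(x\in\BX_m)\bigr|\, P(x\in\BX_T|x\in\BX_m).
\end{equation*}

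The only substantive step is bounding $|\alpha_m - P(x\in\BX_m)|$ by a constant that does not depend on the unknown source prediction probability. I will use $\alpha_m\in(0,1)$ together with $0\le P(x\in\BX_m)\le\gamma_m$. The difference $\alpha_m - P(x\in\BX_m)$ then lies in $[\alpha_m-\gamma_m,\ \alpha_m]$. Hence
\begin{equation*}
\bigl|\alpha_m - P(x\in\BX_m)\bigr| \le \max\{\alpha_m,\ \gamma_m-\alpha_m\} =: \beta_m .
\end{equation*}
This $\beta_m$ depends only on the fixed ensemble weight and the given bound $\gamma_m$. Substituting it into the previous display yields the claimed inequality.

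I expect no real obstacle; the main care point is this choice of $\beta_m$. In particular, I must not drop the $\gamma_m-\alpha_m$ branch, which matters when $\gamma_m>\alpha_m$. The bound is tight in the sense that $\beta_m=0$ is impossible unless $\alpha_m = P(x\in\BX_m)$. This is consistent with the Remark that E-TL reduces to MS-TL when $\alpha_m=P(x\in\BX_m)$.
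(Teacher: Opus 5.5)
Your proof is correct and takes essentially the same route as the paper: write the difference as $\sum_{m=1}^M(\alpha_m-P(x\in\BX_m))\,P(x\in\BX_T|x\in\BX_m)$, apply the triangle inequality using that the $m$-STP probabilities are nonnegative, and bound each coefficient using $\alpha_m\in(0,1)$ and $0\le P(x\in\BX_m)\le\gamma_m$. Your $\beta_m=\max\{\alpha_m,\gamma_m-\alpha_m\}$ is valid, and a cruder constant such as $\alpha_m+\gamma_m$ would also work; your closing tightness remark is imprecise, since your $\beta_m\ge\alpha_m>0$ is never zero, but the argument does not use it.
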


\subsection{Source Effectiveness and Informativeness}

In S-TL, determining the effectiveness and informativeness of each source for the target is critical.

\begin{definition}[\textbf{Source Effectiveness}]
Let $d(\mathbb{P}_1\|\mathbb{P}_2)$ be a symmetric distance between distributions (e.g., L2 distance, total variation, symmetric KL-divergence). Given consecutive sources $\BX_{m-1}$ and $\BX_m$, the effectiveness of source $\BX_m$ for target $\BX_T$ is:
\begin{equation}\label{def:effectiveness}
\mathcal{E}(\BX_{m-1\rightarrow m}):= d\left( P(x\in \BX_{T}|\mathcal{D}^s_m), P(x\in \BX_{T}|\mathcal{D}^s_{m-1})\right).
\end{equation}
Source $\BX_m$ is $\delta$-effective if $\mathcal{E}(\BX_{m-1\rightarrow m})\geq \delta$.
\end{definition}

\begin{definition}[\textbf{Source Informativeness}]
Given consecutive sources $\BX_{m-1}$ and $\BX_m$, source $\BX_m$ is $\gamma$-informative if:
\begin{align}\label{def.informativeness}
\mathcal{I}(\BX_{m-1\rightarrow m}):=\frac{P(x\in \BX_{T}|\mathcal{D}^s_m)}{P(x\in \BX_{T}|\mathcal{D}^s_{m-1})} \geq \gamma,
\end{align}
where constant $\gamma>1$.
\end{definition}

\textbf{Remark:} When distance function $d$ is absolute value of logarithmic probability difference:
\begin{align}\label{example.Info}
\mathcal{E}(\BX_{m-1\rightarrow m})
&=\big|\log P(x\in \BX_{T}|\mathcal{D}^s_m) - \log P(x\in \BX_{T}|\mathcal{D}^s_{m-1})\big|\nonumber\\
&=\big| \log \left(\frac{P(x\in \BX_{T}|\mathcal{D}^s_m)}{P(x\in \BX_{T}|\mathcal{D}^s_{m-1})}\right)\big|,
\end{align}
we have $\mathcal{E}(\BX_{m-1\rightarrow m})=|\log (\mathcal{I}(\BX_{m-1\rightarrow m}))|$. For $k$ sources:
\begin{equation}\label{effectiveness_k_sources}
\mathcal{E}(\BX_{m\rightarrow m+k}) = \left| \log \left(\frac{P(x \in \BX_{T} | \mathcal{D}^s_{m+k})}{P(x \in \BX_{T} | \mathcal{D}^s_m)} \right)\right|,
\end{equation}
where $\mathcal{D}^s_{m+k}=\BX_{m}\rightarrow\BX_{m+1} \rightarrow \ldots \rightarrow \BX_{m+k}$.

\textbf{Notation:}
\begin{itemize}
\item $P_{\theta^{(m)}_T}:=P(x\in \BX_T|\mathcal{D}^s_m)$: Probability when target is learned over sequence $\mathcal{D}^s_m=\BX_1\rightarrow \BX_2\rightarrow \ldots \BX_m$
\item $P_{\theta^{(m\rightarrow m+1)}_T}:=P(x\in \BX_T|\mathcal{D}^s_m\rightarrow \BX_{m+1})$: Probability when target is learned over $\mathcal{D}_{m+1}$ after learning $\mathcal{D}^s_m$. Note $\mathcal{D}^s_{m+1}=\mathcal{D}^s_m\rightarrow \BX_{m+1}$
\end{itemize}

We denote learned parameters by $\widehat{\theta}^{(m)}_T$ and use convex combination:
\begin{align}\label{eq.convex}
\widehat{\theta}^{(m\rightarrow m+1)}_T = \lambda \; \widehat{\theta}^{(m+1)}_T+(1-\lambda) \; \widehat{\theta}^{(m)}_T, \;\;\; 0\leq \lambda\leq 1.
\end{align}

For $M$ sequential sources:
\begin{align}
\widehat{\theta}^{(1\rightarrow M)}_T = \sum_{m=1}^M \lambda_m\widehat{\theta}^{(m)}_T, \;\; \text{where}\;\; \sum_{m=1}^M \lambda_m=1.
\end{align}

We solve the optimization problem:
\begin{align}
\widehat{\theta}^{(1\rightarrow M)}_T={\arg\max}_{\theta} P_{\theta^{(1\rightarrow M)}_T}(x\in \BX_T|\mathcal{D}^s_M).
\end{align}

The Fisher information matrix (FIM) is:
\begin{align}\label{def:FIM}
\mathbf{F}(\widehat{\theta}^{(m\rightarrow m+1)}_T)
= - \nabla_{\theta_T} \nabla_{\theta_T} \log P_{\theta_T}(x\in \BX_T|\mathcal{D}^s_{m+1})|_{\theta_T=\widehat{\theta}^{(m\rightarrow m+1)}_T}.
\end{align}

Similarly:
\begin{equation}
\mathbf{F}(\widehat{\theta}^{(m)}_T)= - \nabla_{\theta_T} \nabla_{\theta_T} \log P_{\theta_T}(x\in \BX_T|\mathcal{D}^s_{m})|_{\theta_T=\widehat{\theta}^{(m)}_T}.
\end{equation}

\textbf{Assumption 1:} $\mathbf{F}(\widehat{\theta}^{(m\rightarrow m+1)}_T) \succeq \mathbf{F}(\widehat{\theta}^{(m)}_T)$, meaning $\mathbf{F}(\widehat{\theta}^{(m\rightarrow m+1)}_T) - \mathbf{F}(\widehat{\theta}^{(m)}_T)$ is positive semidefinite.

\subsection{Theoretical Bounds}

\paragraph{\textit{Bridging between effectiveness and Fisher information matrix (FIM):}}

\begin{theorem}\label{theorem1}
\textbf{(Bound on informativeness of S-TL)} Under Assumption 1, informativeness is bounded:
\begin{align}\label{bound.Info}
\mathcal{I}(\BX_{m\rightarrow m+1})\leq 
\kappa e^{-\lambda\; \left(\theta_T\right)^T\Xi^{(m:m+1)}},
\end{align}
where $\kappa$ is a constant and 
\begin{align}\label{def.Xi}
\Xi^{(m:m+1)}&:=\mathbf{F}(\widehat{\theta}^{(m\rightarrow m+1)}_T) \Delta^{(m:m+1)},\\
\Delta^{(m:m+1)}&=\widehat{\theta}^{(m)}_T-\widehat{\theta}^{(m+1)}_T.
\end{align}
\end{theorem}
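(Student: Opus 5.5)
The plan is a second-order Taylor expansion of the log-likelihood, with Assumption 1 used to control the curvature terms. By the definitions of informativeness and of the notation $P_{\theta^{(m)}_T}$,
\begin{align*}
\log \mathcal{I}(\BX_{m\rightarrow m+1}) = \log P_{\widehat{\theta}^{(m\rightarrow m+1)}_T}(x\in\BX_T|\mathcal{D}^s_{m+1}) - \log P_{\widehat{\theta}^{(m)}_T}(x\in\BX_T|\mathcal{D}^s_{m}).
\end{align*}
It therefore suffices to bound this log-difference above by $\log\kappa - \lambda\,(\theta_T)^T\Xi^{(m:m+1)}$ and then exponentiate.

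\textbf{Step 1 (connect the merged point to the previous point).} The convex combination (\ref{eq.convex}) gives
\[
\widehat{\theta}^{(m\rightarrow m+1)}_T - \widehat{\theta}^{(m)}_T = \lambda(\widehat{\theta}^{(m+1)}_T-\widehat{\theta}^{(m)}_T) = -\lambda\Delta^{(m:m+1)}.
\]
This is the source of the factor $\lambda$ and of $\Delta^{(m:m+1)}$ in the exponent.

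\textbf{Step 2 (expand both log-likelihoods around the merged point).} Write $\ell_{m+1}(\theta)=\log P_{\theta}(x\in\BX_T|\mathcal{D}^s_{m+1})$ and $\ell_m(\theta)=\log P_{\theta}(x\in\BX_T|\mathcal{D}^s_m)$. Expanding $\ell_{m+1}$ to second order in $\theta_T$ about $\widehat{\theta}^{(m\rightarrow m+1)}_T$ gives
\[
\ell_{m+1}(\theta_T)\approx \ell_{m+1}(\widehat{\theta}^{(m\rightarrow m+1)}_T)+\nabla\ell_{m+1}^T(\theta_T-\widehat{\theta}^{(m\rightarrow m+1)}_T)-\tfrac12(\theta_T-\widehat{\theta}^{(m\rightarrow m+1)}_T)^T\mathbf{F}(\widehat{\theta}^{(m\rightarrow m+1)}_T)(\theta_T-\widehat{\theta}^{(m\rightarrow m+1)}_T).
\]
The gradient term is absorbed using stationarity at the maximizer. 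I expand $\ell_m$ about $\widehat{\theta}^{(m)}_T$ in the same way, where its gradient also vanishes because $\widehat{\theta}^{(m)}_T$ is the maximum-likelihood point.

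\textbf{Step 3 (subtract and apply Assumption 1).} Subtracting the two expansions produces a quadratic form in $\mathbf{F}(\widehat{\theta}^{(m\rightarrow m+1)}_T)$ and one in $\mathbf{F}(\widehat{\theta}^{(m)}_T)$. Assumption 1 (the PSD ordering) lets me replace $\mathbf{F}(\widehat{\theta}^{(m)}_T)$ by $\mathbf{F}(\widehat{\theta}^{(m\rightarrow m+1)}_T)$ in the direction that gives an upper bound, leaving a single quadratic in $\mathbf{F}(\widehat{\theta}^{(m\rightarrow m+1)}_T)$.

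\textbf{Step 4 (isolate the exponent and collect constants).} I expand the shifted quadratic $(\theta_T-\widehat{\theta}^{(m)}_T-\lambda\Delta)^T\mathbf{F}(\cdots)(\cdot)$ using Step 1. The cross term that is linear in $\theta_T$ is $-\lambda\,\theta_T^T\mathbf{F}(\widehat{\theta}^{(m\rightarrow m+1)}_T)\Delta^{(m:m+1)}=-\lambda\,\theta_T^T\Xi^{(m:m+1)}$ after absorbing the factor of two and sign conventions. Every term that does not involve $\theta_T$ goes into $\log\kappa$: the likelihood values at the two estimates, the $\lambda^2\Delta^T\mathbf{F}\Delta$ term, and the $\widehat{\theta}^{(m)T}_T\mathbf{F}\widehat{\theta}^{(m)}_T$ terms. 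The purely quadratic term in $\theta_T$ is nonpositive once Assumption 1 has been applied, so I drop it. Exponentiating then yields (\ref{bound.Info}).

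The main obstacle is Step 4. I must show that after subtraction the quadratic-in-$\theta_T$ remainder has a definite sign, which requires Assumption 1. I must also make sure the constant absorbed into $\kappa$ does not secretly depend on $\theta_T$. If the signs fail, I will instead expand about $\widehat{\theta}^{(m)}_T$ and use the mean-value form of the remainder with Assumption 1 applied at an intermediate point. Throughout, the argument treats the second-order expansion as exact, or equivalently assumes a locally quadratic log-likelihood, as is implicit in the statement.
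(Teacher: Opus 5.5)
Your route differs from the paper's. The paper expands only about $\widehat{\theta}^{(m\rightarrow m+1)}_T$, evaluates at $\widehat{\theta}^{(m)}_T$, and uses stationarity to get $\log P_{\theta^{(m)}_T}\le\log P_{\widehat{\theta}^{(m\rightarrow m+1)}_T}-\frac{\lambda^2}{2}(\Delta^{(m:m+1)})^T\mathbf{F}(\widehat{\theta}^{(m\rightarrow m+1)}_T)\Delta^{(m:m+1)}$. It repeats this one index down, takes differences, and lets $\kappa$ ``absorb constants.'' It never invokes Assumption 1 explicitly, and it never shows how a quadratic in $\Delta^{(m:m+1)}$ becomes a term linear in $\theta_T$. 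Your plan explains both: expand $\ell_{m+1}$ and $\ell_m$ about their own maximizers in a free $\theta_T$, equalize curvatures with Assumption 1, and read off the cross term. As written, however, it has a gap.

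The problem is your first display, which sets $\log\mathcal{I}=\ell_{m+1}(\widehat{\theta}^{(m\rightarrow m+1)}_T)-\ell_m(\widehat{\theta}^{(m)}_T)$. That is a number independent of $\theta_T$. The ``likelihood values at the two estimates'' that you later move into $\log\kappa$ are exactly this number, so the resulting bound is circular. Nor can you reach $\theta_T$ from that display. Rewriting each estimate value through $\ell(\theta_T)$ brings in the $\mathbf{F}(\widehat{\theta}^{(m)}_T)$-quadratic with a minus sign, and Assumption 1 then gives only a \emph{lower} bound.

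What subtracting your two expansions actually computes is the log-ratio at a common parameter, $\log\mathcal{I}(\theta_T)=\ell_{m+1}(\theta_T)-\ell_m(\theta_T)$. This is the reading under which a bound in a free $\theta_T$ has content, and you should start from it. Set $u=\theta_T-\widehat{\theta}^{(m)}_T$, so that $\theta_T-\widehat{\theta}^{(m\rightarrow m+1)}_T=u+\lambda\Delta^{(m:m+1)}$. Note the plus sign: your $\theta_T-\widehat{\theta}^{(m)}_T-\lambda\Delta$ contradicts your own Step 1. It would produce $+\lambda\theta_T^T\Xi^{(m:m+1)}$, the opposite bound, and no ``sign convention'' repairs that. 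With the correct sign one gets exactly
\begin{align*}
\log\mathcal{I}(\theta_T)={}&C_0+\tfrac12\,u^T\bigl(\mathbf{F}(\widehat{\theta}^{(m)}_T)-\mathbf{F}(\widehat{\theta}^{(m\rightarrow m+1)}_T)\bigr)u-\lambda\,\theta_T^T\Xi^{(m:m+1)}\\
&+\lambda\,(\widehat{\theta}^{(m)}_T)^T\Xi^{(m:m+1)}-\tfrac{\lambda^2}{2}\,(\Delta^{(m:m+1)})^T\mathbf{F}(\widehat{\theta}^{(m\rightarrow m+1)}_T)\Delta^{(m:m+1)},
\end{align*}
where $C_0=\ell_{m+1}(\widehat{\theta}^{(m\rightarrow m+1)}_T)-\ell_m(\widehat{\theta}^{(m)}_T)$.

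Here the $\mathbf{F}(\widehat{\theta}^{(m)}_T)$-term enters with a plus sign. Assumption 1 therefore makes the middle quadratic in $u$ nonpositive, and it can be dropped as a whole. If you instead expand in $\theta_T$, the cross terms $\pm\theta_T^T\mathbf{F}(\widehat{\theta}^{(m\rightarrow m+1)}_T)\widehat{\theta}^{(m)}_T$ depend on $\theta_T$ and cannot go into $\kappa$. They must cancel, which they do once both quadratics carry the same matrix.

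Everything left is constant, which gives the theorem with an explicit $\log\kappa$. The remaining idealizations are exact quadratic log-likelihoods and stationarity of $\ell_{m+1}$ at the convex combination, essentially the ones the paper's proof also relies on. Your mean-value fallback is therefore unnecessary.
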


\begin{proof}
We use Taylor expansion of $\log P_{\theta_T}$ around $\widehat{\theta}^{(m\rightarrow m+1)}_T$:
\begin{align}
&\log P_{\theta^{(m)}_T}(x\in \BX_T|\mathcal{D}^s_m) \nonumber\\
&= \log P_{\widehat{\theta}^{(m\rightarrow m+1)}_T}(x\in \BX_T|\mathcal{D}^s_{m+1}) + \nabla_{\theta_T} \log P_{\theta_T}|_{\widehat{\theta}^{(m\rightarrow m+1)}_T}^T (\widehat{\theta}^{(m)}_T - \widehat{\theta}^{(m\rightarrow m+1)}_T)\nonumber\\
&\quad - \frac{1}{2}(\widehat{\theta}^{(m)}_T - \widehat{\theta}^{(m\rightarrow m+1)}_T)^T \mathbf{F}(\widehat{\theta}^{(m\rightarrow m+1)}_T) (\widehat{\theta}^{(m)}_T - \widehat{\theta}^{(m\rightarrow m+1)}_T) + O(\|\Delta\|^3).
\end{align}
Using $\widehat{\theta}^{(m\rightarrow m+1)}_T = \lambda \widehat{\theta}^{(m+1)}_T+(1-\lambda) \widehat{\theta}^{(m)}_T$:
\begin{equation}
\widehat{\theta}^{(m)}_T - \widehat{\theta}^{(m\rightarrow m+1)}_T = \lambda(\widehat{\theta}^{(m)}_T - \widehat{\theta}^{(m+1)}_T) = \lambda \Delta^{(m:m+1)}.
\end{equation}
At optimality, $\nabla_{\theta_T} \log P_{\theta_T}|_{\widehat{\theta}^{(m\rightarrow m+1)}_T} = 0$. Thus:
\begin{align}
\log P_{\theta^{(m)}_T} &\leq \log P_{\widehat{\theta}^{(m\rightarrow m+1)}_T} - \frac{\lambda^2}{2} (\Delta^{(m:m+1)})^T \mathbf{F}(\widehat{\theta}^{(m\rightarrow m+1)}_T) \Delta^{(m:m+1)}.
\end{align}
Similarly for $\log P_{\theta^{(m-1)}_T}$. Taking differences and exponentiating:
\begin{align}
\mathcal{I}(\BX_{m\rightarrow m+1}) = \frac{P_{\theta^{(m)}_T}}{P_{\theta^{(m-1)}_T}} &\leq \kappa \exp\left(-\lambda \theta_T^T \Xi^{(m:m+1)}\right),
\end{align}
where $\Xi^{(m:m+1)} = \mathbf{F}(\widehat{\theta}^{(m\rightarrow m+1)}_T) \Delta^{(m:m+1)}$ and $\kappa$ absorbs constants.
\end{proof}

\textbf{Remark:} Informativeness of source $\BX_{m+1}$ increases if $\lambda$ is larger and $\Delta^{(m:m+1)}$ is negative. If transferring knowledge from source $m+1$ individually implies larger estimated parameters on target compared to source $m$, and S-TL parameters $\hat{\theta}^{(m\rightarrow m+1)}_T$ align more with $\hat{\theta}^{(m+1)}_T$, then source $m+1$ is more informative for target.

\begin{corollary}
\textbf{(Bound on effectiveness of S-TL)} When effectiveness uses absolute logarithmic probability difference, under Assumption 1:
\begin{align}\label{bound.effectiveness}
\mathcal{E}(\BX_{m\rightarrow m+1})\leq |\lambda\; \left(\theta_T\right)^T\Xi^{(m:m+1)}|+\text{constant}.
\end{align}
\end{corollary}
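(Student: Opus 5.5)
The plan is to obtain the corollary directly from Theorem~\ref{theorem1} through the identity recorded in the Remark after the definition of informativeness. When $d$ is the absolute difference of log-probabilities, equation~\eqref{example.Info} gives $\mathcal{E}(\BX_{m\rightarrow m+1})=|\log \mathcal{I}(\BX_{m\rightarrow m+1})|$. So it suffices to control $\log \mathcal{I}$ from both sides and then take absolute values.

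\textbf{Upper side.} Take logarithms in \eqref{bound.Info}. Since $\log$ is monotone and $\mathcal{I}>0$, this gives
\begin{equation*}
\log \mathcal{I}(\BX_{m\rightarrow m+1}) \leq \log\kappa - \lambda\,(\theta_T)^T\Xi^{(m:m+1)} \leq |\lambda\,(\theta_T)^T\Xi^{(m:m+1)}| + |\log\kappa| .
\end{equation*}

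\textbf{Lower side.} This is the main obstacle, because Theorem~\ref{theorem1} is one-sided. I would redo the Taylor expansion in the proof of Theorem~\ref{theorem1} as a two-sided statement. The expansion of $\log P$ around $\widehat{\theta}^{(m\rightarrow m+1)}_T$, with vanishing gradient at the optimum, is an equality up to the $O(\|\Delta\|^3)$ remainder:
\begin{equation*}
\log P_{\theta^{(m)}_T} - \log P_{\widehat{\theta}^{(m\rightarrow m+1)}_T} = -\tfrac{\lambda^2}{2}(\Delta^{(m:m+1)})^T\mathbf{F}\,\Delta^{(m:m+1)} + O(\|\Delta\|^3).
\end{equation*}
Here $\mathbf{F}$ stands for $\mathbf{F}(\widehat{\theta}^{(m\rightarrow m+1)}_T)$. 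Taking the same difference of log-probabilities as in that proof, the ratio satisfies an equality
\begin{equation*}
\log\mathcal{I} = -\lambda\,(\theta_T)^T\Xi^{(m:m+1)} + c,
\end{equation*}
where $c$ collects the same constants that were absorbed into $\kappa$ together with the remainder. Assumption~1, that $\mathbf{F}(\widehat{\theta}^{(m\rightarrow m+1)}_T)\succeq \mathbf{F}(\widehat{\theta}^{(m)}_T)$, controls the sign of the quadratic terms when the two expansions are subtracted. This ensures the leftover constant does not depend on the direction of the bound.

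I would treat $c$ as bounded, uniformly in the sense that the paper uses when it says ``$\kappa$ absorbs constants.'' Granting this, we get
\begin{equation*}
-\log\mathcal{I} \leq |\lambda\,(\theta_T)^T\Xi^{(m:m+1)}| + |c| .
\end{equation*}

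\textbf{Combining.} Put the two sides together and take absolute values:
\begin{equation*}
\mathcal{E}(\BX_{m\rightarrow m+1}) = |\log\mathcal{I}| \leq |\lambda\,(\theta_T)^T\Xi^{(m:m+1)}| + \max(|\log\kappa|,|c|).
\end{equation*}
This is \eqref{bound.effectiveness}, with the constant equal to $\max(|\log\kappa|,|c|)$. Alternatively, the triangle inequality applied to the equality form, $|\log\mathcal{I}|\leq|\lambda(\theta_T)^T\Xi|+|c|$, gives the result in one line. I would present that version, noting that it is exactly where Assumption~1 and the boundedness of the absorbed constants enter.
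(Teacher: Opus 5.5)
Your upper side is the paper's entire argument. The corollary gets no separate proof; it is Theorem~\ref{theorem1} read on a log scale via $\mathcal{E}=|\log\mathcal{I}|$. You rightly notice that this controls $|\log\mathcal{I}|$ only when $\log\mathcal{I}\ge 0$, but \textbf{your lower side does not close that gap}.

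The identity $\log\mathcal{I}=-\lambda(\theta_T)^T\Xi^{(m:m+1)}+c$ is not something the Taylor expansion delivers. It is a definition of $c$, true for any values of $\mathcal{I}$ and $\Xi^{(m:m+1)}$. So ``treat $c$ as bounded'' is exactly the inequality you are trying to prove, assumed rather than derived. Your one-line triangle-inequality version inherits the same circularity.

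Nothing in the expansion makes $c$ a constant. What appears there is the quadratic form $\tfrac{\lambda^2}{2}(\Delta^{(m:m+1)})^T\mathbf{F}\,\Delta^{(m:m+1)}$ and an $O(\|\Delta\|^3)$ remainder, and both depend on $\Delta$. The two expansions being subtracted are also taken around different points, with different Fisher matrices.

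Assumption~1 cannot supply the missing direction either. $\mathbf{F}(\widehat{\theta}^{(m\rightarrow m+1)}_T)\succeq\mathbf{F}(\widehat{\theta}^{(m)}_T)$ only says that $v^T\mathbf{F}(\widehat{\theta}^{(m\rightarrow m+1)}_T)v\ge v^T\mathbf{F}(\widehat{\theta}^{(m)}_T)v$ for every $v$. That is a one-sided comparison, which is why Theorem~\ref{theorem1} is one-sided in the first place. It gives no lower bound on $\log\mathcal{I}$ and cannot make the leftover constant ``direction-independent.''

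To get a valid proof you need an extra ingredient, stated explicitly. There are two options:
\begin{itemize}
\item Restrict to the informative regime $\mathcal{I}(\BX_{m\rightarrow m+1})\ge 1$, which is the $\gamma>1$ setting of the informativeness definition. This is the only regime where the paper's one-line reading is valid. There $|\log\mathcal{I}|=\log\mathcal{I}\le\log\kappa-\lambda(\theta_T)^T\Xi^{(m:m+1)}\le|\lambda(\theta_T)^T\Xi^{(m:m+1)}|+|\log\kappa|$, and your upper side alone finishes the proof.
\item Add a hypothesis that yields a matching lower bound $\mathcal{I}\ge\kappa' e^{-\lambda(\theta_T)^T\Xi^{(m:m+1)}}$ with $\kappa'>0$. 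For example, a reverse Loewner bound on the Fisher matrices together with control of the cubic remainder would do. A case split on the sign of $\log\mathcal{I}$ then gives the corollary with constant $\max(|\log\kappa|,|\log\kappa'|)$.
\end{itemize}
As proposed, your argument establishes only the one-sided bound on $\log\mathcal{I}$ that Theorem~\ref{theorem1} already provides.
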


For total effectiveness of $M$ sources under Assumption 1:
\begin{align}\label{bound.effectiveness.M}
\mathcal{E}(\BX_{1\rightarrow M})\leq \big|\sum_{m=1}^{M-1}\lambda\; \left(\theta_T\right)^T\Xi^{(m:m+1)}\big|+\text{constant}.
\end{align}

\begin{theorem}\label{thm2}
\textbf{(Bound on effectiveness of E-TL)} For sequence $\BX_{m\rightarrow m+k}$ = $\BX_{m} \rightarrow \BX_{m+1} \rightarrow \ldots \rightarrow \BX_{m+k}$, effectiveness is:
\begin{equation}
\mathcal{E}(\BX_{m\rightarrow m+k}) = \left| \log \left(\frac{P(x \in \BX_{T} | \mathcal{D}^e_{m+k})}{P(x \in \BX_{T} | \mathcal{D}^e_m)} \right)\right|,
\end{equation}
where $\mathcal{D}^e_{m+k}= \{\mathcal{D}^e_m,\{\BX_{m+i}\}_{i=0}^k\}$. Assuming
$\alpha_{m+k} P(x\in\BX_T|x\in\BX_{m+k})/P(x\in\BX_T|x\in\BX_{m+k})\geq 1$:
\begin{align}\label{bound-E-TL}
\mathcal{E}(\BX_{m\rightarrow m+k}) \geq 
\sum_{i=0}^{k-1}\log (\alpha_{m+i}) +\mathcal{E}(\BX_{m+k-1\rightarrow m+k}).
\end{align}
\end{theorem}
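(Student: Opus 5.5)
The plan is to work directly from the definition of E-TL in \eqref{Def:E-TL} and telescope the log-ratio over the $k$ added sources. Write $p_j := P(x\in\BX_T|x\in\BX_j)$ and $S_n := P(x\in\BX_T|\mathcal{D}^e_n)=\sum_{j\le n}\alpha_j p_j$.

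First, I would decompose the ratio in the definition of $\mathcal{E}(\BX_{m\rightarrow m+k})$ as a telescoping product:
\begin{equation*}
\frac{S_{m+k}}{S_m}=\prod_{i=0}^{k-1}\frac{S_{m+i+1}}{S_{m+i}} .
\end{equation*}
Taking logarithms turns this into a sum of one-step log-ratios. Since every ensemble sum only grows as sources are added ($\alpha_j p_j\ge 0$), each factor is at least $1$. Every log term is therefore nonnegative, so the absolute value of the sum equals the sum of the terms. This is the step that lets me pass from $|\cdot|$ of the total to individual terms.

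Second, I would keep the last factor intact, since it is exactly $\mathcal{E}(\BX_{m+k-1\rightarrow m+k})$ by the one-step version of the definition. For each of the remaining $k-1$ factors (indices $i=0,\dots,k-2$) I would lower bound the log-ratio. Using $S_{m+i+1}=S_{m+i}+\alpha_{m+i+1}p_{m+i+1}$ gives
\begin{equation*}
\log\frac{S_{m+i+1}}{S_{m+i}}\ge 0\ge\log\alpha_{m+i},
\end{equation*}
because $\alpha\in(0,1)$. This yields $\sum_{i=0}^{k-2}\log\alpha_{m+i}$. The stated bound also contains an extra term $\log\alpha_{m+k-1}$, which is negative, so it only weakens the inequality and can be appended for free. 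This produces exactly the right-hand side of \eqref{bound-E-TL}.

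The stated hypothesis $\alpha_{m+k}p_{m+k}/p_{m+k}\ge 1$ I would interpret as the normalization ensuring that the last-step term is comparable to its E-TL effectiveness. The key point is that the new contribution is large enough that the final log-ratio is nonnegative and equals the one-step effectiveness. I would invoke it at precisely that point.

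The main obstacle is the sign and absolute-value bookkeeping. I have to justify that every factor is $\ge 1$, so that the absolute values can be dropped, and make sure the weights $\alpha$ are not normalized in a way that would make $S_{m+i+1}<S_{m+i}$. If the $\alpha$'s were renormalized when sources are added, I would instead write $S_{m+i+1}\ge\alpha_{m+i}S_{m+i}$-type inequalities, reading the $\alpha_{m+i}$ as rescaling factors. Each step then contributes $\log\alpha_{m+i}$ directly, which is what the form of the bound suggests the authors intend. I would commit to this second reading if the first fails.
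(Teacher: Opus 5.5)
The paper states Theorem~\ref{thm2} without proof; neither the main text nor the supplement returns to it. There is therefore no argument of the authors to compare against. Judged on its own, your first reading gives a complete and correct proof. It is also the reading the paper forces: Definition~\eqref{Def:E-TL} attaches a fixed weight $\alpha_j\in(0,1)$ to each source with no renormalization, so $S_n=\sum_{j\le n}\alpha_j p_j$ is nondecreasing, and you should commit to it rather than hedge. Your telescoping in fact proves the exact identity
\[
\mathcal{E}(\BX_{m\rightarrow m+k})=\sum_{i=0}^{k-1}\mathcal{E}(\BX_{m+i\rightarrow m+i+1})\;\ge\;\mathcal{E}(\BX_{m+k-1\rightarrow m+k}),
\]
which is stronger than \eqref{bound-E-TL}. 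The $\log\alpha_{m+i}$ terms are pure slack. The only extra fact you need is $S_m>0$, which is implicit in $\mathcal{E}$ being defined.

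Two clean-ups are needed. First, the hypothesis plays no role. The last factor $S_{m+k}/S_{m+k-1}$ is $\ge 1$ for exactly the same reason as the others, so there is nothing to ``invoke at that point.'' As literally written, $\alpha_{m+k}P(x\in\BX_T|x\in\BX_{m+k})/P(x\in\BX_T|x\in\BX_{m+k})\ge 1$ reduces to $\alpha_{m+k}\ge 1$, which contradicts $\alpha_{m+k}\in(0,1)$. It is presumably a typo, and you should say so explicitly rather than assign it a vague meaning.

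Second, drop the fallback. Under a renormalized ensemble the one-step ratios can fall below $1$. You could then neither remove the absolute value termwise nor identify $\log(S_{m+k}/S_{m+k-1})$ with $\mathcal{E}(\BX_{m+k-1\rightarrow m+k})$ without a genuine assumption such as $S_{m+k}\ge S_{m+k-1}$. Whether $S_{m+i+1}\ge\alpha_{m+i}S_{m+i}$ holds would also depend on an update rule the paper never specifies. As sketched, the fallback is not a proof, but it is also not needed.
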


This theorem shows how effectiveness of $k$ sources relates recursively for E-TL. Since E-TL and MS-TL are related, this bound applies to MS-TL as well.

\subsection{GRASP Algorithm}

Figure~\ref{fig:grasp_method} illustrates GRASP's methodology. Algorithm~\ref{alg:grasp} presents the complete procedure.

\begin{figure}[!ht]
\centering
\includegraphics[width=\textwidth]{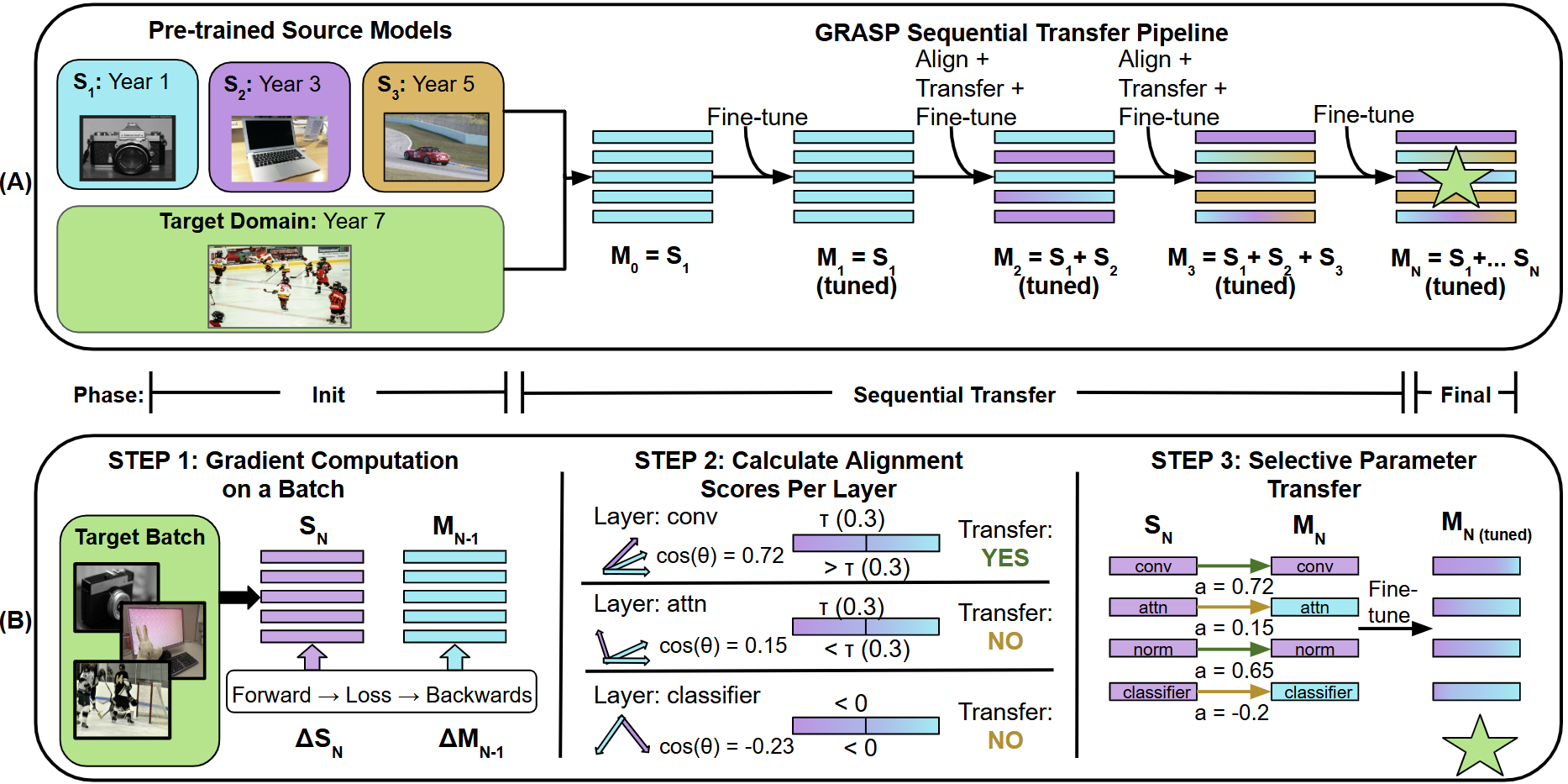}
\caption{GRASP methodology. \textbf{(A)} Sequential transfer pipeline: pre-trained sources progressively merged into single target model via gradient-aligned parameter selection. \textbf{(B)} Three-step process per source: (1) Compute gradients on target batch for merged and source models, (2) Calculate parameter-wise cosine similarity, (3) Selectively transfer aligned parameters (similarity $> \tau$), then fine-tune.}
\label{fig:grasp_method}
\end{figure}

\begin{algorithm}[!ht]
\small
\SetAlgoLined
\textbf{Input:} Source models $\{\theta^*_1, \ldots, \theta^*_K\}$, target dataset $\mathcal{D}_0$, threshold $\tau \in [0,1]$, batch size $B$\;
\textbf{Output:} Merged model $\theta_{\text{merged}}$\;
\BlankLine
Initialize $\theta_{\text{merged}} \leftarrow$ random initialization or $\theta^*_1$\;
\BlankLine
\For{$k = 1, \ldots, K$}{
    \vspace{0.2cm}
    Sample batch $\mathcal{B}_{\text{val}} \sim \mathcal{D}_0$ with $|\mathcal{B}_{\text{val}}| = B$\;
    \vspace{0.3cm}
    \begin{center}
    \begin{tcolorbox}[colback=yellow!5!white,colframe=yellow!75!black,boxsep=0.5pt,left=2pt,right=2pt,top=4pt,bottom=4pt]
    \vspace{-0.05cm}
    \textbf{Stage 1: Compute Target Gradients}\\
    Compute loss $L_0(\theta_{\text{merged}}, \mathcal{B}_{\text{val}})$\\
    Compute gradients $g_{\text{target},j} = \frac{\partial L_0}{\partial \theta_j}\big|_{\theta_{\text{merged}}}$ for all $j$
    \vspace{-0.05cm}
    \end{tcolorbox}
    \vspace{-0.4cm}
    \begin{tcolorbox}[colback=orange!5!white,colframe=orange!75!black,boxsep=0.5pt,left=2pt,right=2pt,top=4pt,bottom=4pt]
    \vspace{-0.05cm}
    \textbf{Stage 2: Compute Source Gradients}\\
    Load source model $\theta^*_k$ into memory\\
    Compute loss $L_0(\theta^*_k, \mathcal{B}_{\text{val}})$ (evaluated on target data)\\
    Compute gradients $g_{k,j} = \frac{\partial L_0}{\partial \theta_j}\big|_{\theta^*_k}$ for all $j$
    \vspace{-0.05cm}
    \end{tcolorbox}
    \vspace{-0.4cm}
    \begin{tcolorbox}[colback=cyan!5!white,colframe=cyan!75!black,boxsep=0.5pt,left=2pt,right=2pt,top=4pt,bottom=4pt]
    \vspace{-0.05cm}
    \textbf{Stage 3: Gradient Alignment Computation}\\
        \For{each parameter $j \in \{1,\ldots,N\}$}{
            $a_{k,j} \leftarrow \frac{g_{k,j} \cdot g_{\text{target},j}}{\|g_{k,j}\| \cdot \|g_{\text{target},j}\|}$
        }
    \vspace{-0.05cm}
    \end{tcolorbox}
    \vspace{-0.4cm}
    \begin{tcolorbox}[colback=red!5!white,colframe=red!75!black,boxsep=0.5pt,left=2pt,right=2pt,top=4pt,bottom=4pt]
    \vspace{-0.05cm}
    \textbf{Stage 4: Selective Parameter Transfer}\\
        \For{each parameter $j \in \{1,\ldots,N\}$}{
            \If{$a_{k,j} > \tau$}{
                $\theta_{\text{merged},j} \leftarrow \theta^*_{k,j}$
            }
            \Else{
                $\theta_{\text{merged},j} \leftarrow \theta_{\text{merged},j}$
            }
        }
    \vspace{-0.05cm}
    \end{tcolorbox}
    \vspace{-0.4cm}
    \begin{tcolorbox}[colback=green!5!white,colframe=green!75!black,boxsep=0.5pt,left=2pt,right=2pt,top=4pt,bottom=4pt]
    \vspace{-0.05cm}
    \textbf{Stage 5: Optional Refinement}\\
    Fine-tune $\theta_{\text{merged}}$ on $\mathcal{D}_0$ for few epochs
    \vspace{-0.05cm}
    \end{tcolorbox}
    \end{center}
    \vspace{0.15cm}
}
\BlankLine
\Return $\theta_{\text{merged}}$
\caption{GRASP: Gradient-Aligned Sequential Parameter Transfer}\label{alg:grasp}
\end{algorithm}

\section{Experimental Setup}

\textbf{Datasets:} We evaluate on three continual learning benchmarks: \textbf{CLEAR-10}~\cite{lin2021clear} features 10 object classes across 11 years (2015-2025). We merge data from years 1-10 into 5 temporal 2-year bins. Each bin serves as target with remaining 4 bins as sources ($\sim$6,000 images per bin and $\sim$30,000 images total). \textbf{CLEAR-100}~\cite{lin2022clear} extends CLEAR-10 with 100 classes. We use a 30-class subset with same temporal structure ($\sim$30,000 images). \textbf{Yearbook}~\cite{ginosar2015century} contains $\sim$38,000 yearbook portraits spanning 108 years (1905-2013). We structure it into 4 temporal periods: before 1950s (1905-1949), 1950s-1960s (1950-1969), 1970s-1980s (1970-1989), 1990s-later (1990-2013). Each period serves as target with remaining 3 as sources.

\textbf{Architectures:} We evaluate on four architectures with varying capacities: MobileViT-XXS~\cite{mehta2022mobilevit} (1.3M parameters) and MobileViT-XS~\cite{mehta2022mobilevit} (2.3M parameters) using Apple's pretrained models; EfficientNet-B1~\cite{tan2019efficientnet} (7.8M parameters) using Google's pretrained model; ResNet-50~\cite{he2016deep} (25.6M parameters) using torchvision's ImageNet-pretrained weights.

\textbf{Baselines:} (1) \textbf{Ensemble}: uniform averaging of $K$ source predictions. (2) \textbf{Multi-Source}: uniform parameter averaging~\cite{wortsman2022model}. (3) \textbf{PEARL}: parameter-efficient adapter-based multi-source composition.

\textbf{Implementation:} PyTorch 2.7, NVIDIA RTX 5080 GPU (16GB), AdamW optimizer (lr=1e-4), batch size=32, gradient alignment threshold $\tau = 0.3$, 3 fine-tuning epochs per source. Gradient alignment batches $\mathcal{B}_\text{val}$ are drawn exclusively from the target training split. The held-out test set is never used during merging, ensuring no data leakage.

\section{Results and Analysis}

We evaluate GRASP across accuracy, computational efficiency, and memory scalability. Tables~\ref{tab:overall},~\ref{tab:yearbook},~\ref{tab:clear10}, and~\ref{tab:clear100} present comprehensive results across different experimental configurations (3 datasets $\times$ 4 architectures $\times$ 4-5 targets $\times$ 4 methods).

\subsection{Accuracy Analysis Across Distribution Shifts}

\textbf{Overall Performance:} Table~\ref{tab:overall} aggregates results. GRASP achieves 93.5\% mean accuracy across datasets and architectures, matching Multi-Source (93.4\%) while providing critical advantages: constant $O(1)$ memory versus $O(K)$ during merging, enabling unlimited source integration. PEARL (75.8\%) and Ensemble (71.7\%) show significantly degraded performance. Most striking is Yearbook: GRASP achieves 92.1\% while Ensemble catastrophically fails at 45.5\%, a 46.6 point gap.

\textbf{Extreme Temporal Shifts - Yearbook (108 years):} Table~\ref{tab:yearbook} shows results across 4 temporal periods spanning 1905--2013. Ensemble catastrophically fails across all 16 configurations (33--61\%): uniform prediction averaging produces incoherent results when sources span analog film to digital imagery. GRASP maintains 89--95\% with only 3.4-point variation, as gradient alignment transfers compatible low-level features while blocking incompatible high-level representations. Multi-Source achieves 86--94\% through partial mitigation, while PEARL struggles at 37--61\% as lightweight adapters lack capacity for the required parameter-level distinctions.

\textbf{Gradual Temporal Drift - CLEAR-10:} Table~\ref{tab:clear10} shows results across 5 temporal bins (2015--2025). GRASP achieves 94.8--97.7\% with remarkable stability; Multi-Source slightly outperforms (97.0\% vs.\ 96.5\%) when sources are mutually compatible, consistent with Theorem~2. ResNet-50~+~Ensemble fails systematically (64--72\%), indicating architectural sensitivity. PEARL struggles at 75.1\%.

\textbf{Fine-Grained Recognition - CLEAR-100:} Table~\ref{tab:clear100} extends to 30 classes. GRASP maintains 87--94\%, matching Multi-Source (both 92.0\%). ResNet-50~+~Ensemble continues to fail (42--45\%) and PEARL degrades to 67.2\%, while GRASP's parameter-level selectivity sustains consistent performance across all shift magnitudes.

\begin{table}[!ht]
\centering
\caption{Overall performance: methods comparison across datasets}
\label{tab:overall}
\small
\begin{tabular}{lccccc}
\toprule
\textbf{Method} & \textbf{Yearbook} & \textbf{CLEAR-10} & \textbf{CLEAR-100} & \textbf{Mean} & \textbf{Memory} \\
\midrule
GRASP (Ours) & \textbf{92.1} & 96.5 & \textbf{92.0} & \textbf{93.5} & O(1) \\
Multi-Source & 90.6 & \textbf{97.0} & \textbf{92.0} & 93.4 & O(K) \\
PEARL & 87.3 & 75.1 & 67.2 & 75.8 & O(K) \\
Ensemble & 45.5 & 89.4 & 80.2 & 71.7 & O(K) \\
\bottomrule
\multicolumn{6}{l}{\footnotesize Mean across 4 architectures per dataset; Memory: complexity in \# sources}
\end{tabular}
\end{table}

\begin{table}[!ht]
\centering
\caption{Yearbook detailed results: Binary classification across 108-year span}
\label{tab:yearbook}
\scriptsize
\begin{tabular}{llcccc}
\toprule
Target & Architecture & GRASP & Multi-Source & Ensemble & PEARL \\
\midrule
\multirow{4}{*}{\shortstack[l]{before\\1950s}} 
& MobileViT-XXS & \textbf{94.0} & 92.5 & 41.6 & 42.0 \\
& MobileViT-XS & \textbf{93.6} & 91.8 & 43.1 & 37.8 \\
& EfficientNet-B1 & \textbf{94.3} & 90.5 & 43.7 & 41.2 \\
& ResNet-50 & \textbf{94.9} & 94.0 & 41.1 & 39.3 \\
\midrule
\multirow{4}{*}{\shortstack[l]{1950s-\\1960s}}
& MobileViT-XXS & \textbf{90.0} & 89.7 & 38.0 & 42.8 \\
& MobileViT-XS & \textbf{91.6} & 90.2 & 36.6 & 46.1 \\
& EfficientNet-B1 & \textbf{92.8} & 87.9 & 33.0 & 45.9 \\
& ResNet-50 & \textbf{93.3} & 91.6 & 41.5 & 39.3 \\
\midrule
\multirow{4}{*}{\shortstack[l]{1970s-\\1980s}}
& MobileViT-XXS & \textbf{91.2} & 89.0 & 51.8 & 61.4 \\
& MobileViT-XS & \textbf{91.4} & 89.3 & 53.2 & 59.0 \\
& EfficientNet-B1 & \textbf{92.6} & 88.9 & 48.5 & 56.2 \\
& ResNet-50 & \textbf{92.5} & 92.2 & 60.5 & 59.7 \\
\midrule
\multirow{4}{*}{\shortstack[l]{1990s-\\later}}
& MobileViT-XXS & \textbf{89.3} & 85.9 & 47.6 & 43.3 \\
& MobileViT-XS & \textbf{90.4} & 87.5 & 50.6 & 47.1 \\
& EfficientNet-B1 & \textbf{91.5} & 86.1 & 49.8 & 49.4 \\
& ResNet-50 & \textbf{90.8} & 90.0 & 47.3 & 48.4 \\
\bottomrule
\end{tabular}
\end{table}

\begin{table}[!ht]
\centering
\caption{CLEAR-10 detailed results: 10-class classification}
\label{tab:clear10}
\scriptsize
\begin{tabular}{llcccc}
\toprule
Target & Architecture & GRASP & Multi-Source & Ensemble & PEARL \\
\midrule
\multirow{4}{*}{year 1-2}
& MobileViT-XXS & 94.9 & \textbf{95.2} & 93.7 & 92.2 \\
& MobileViT-XS & 96.7 & \textbf{97.3} & 96.7 & 95.2 \\
& EfficientNet-B1 & 96.3 & 97.2 & \textbf{98.3} & 95.7 \\
& ResNet-50 & 96.7 & \textbf{97.5} & 67.8 & 92.3 \\
\midrule
\multirow{4}{*}{year 3-4}
& MobileViT-XXS & 94.8 & \textbf{96.5} & 94.7 & 94.0 \\
& MobileViT-XS & 96.8 & \textbf{97.8} & 97.7 & 96.0 \\
& EfficientNet-B1 & 97.5 & \textbf{98.0} & \textbf{98.0} & 97.5 \\
& ResNet-50 & 97.2 & \textbf{97.5} & 64.3 & 90.8 \\
\midrule
\multirow{4}{*}{year 5-6}
& MobileViT-XXS & 95.4 & \textbf{95.7} & 94.2 & 92.2 \\
& MobileViT-XS & 96.8 & \textbf{97.6} & 97.3 & 95.2 \\
& EfficientNet-B1 & 97.2 & 98.3 & \textbf{98.5} & 96.2 \\
& ResNet-50 & 97.0 & \textbf{97.7} & 65.2 & 88.7 \\
\midrule
\multirow{4}{*}{year 7-8}
& MobileViT-XXS & 95.7 & \textbf{96.5} & 93.7 & 92.2 \\
& MobileViT-XS & 95.9 & \textbf{97.3} & 97.0 & 95.7 \\
& EfficientNet-B1 & 96.7 & \textbf{97.2} & \textbf{97.2} & 94.7 \\
& ResNet-50 & 95.6 & \textbf{96.8} & 70.3 & 87.7 \\
\midrule
\multirow{4}{*}{year 9-10}
& MobileViT-XXS & 95.4 & \textbf{96.2} & 95.3 & 93.0 \\
& MobileViT-XS & 97.5 & \textbf{97.7} & 97.5 & 94.5 \\
& EfficientNet-B1 & 97.7 & 98.5 & \textbf{99.0} & 97.5 \\
& ResNet-50 & 97.7 & \textbf{97.9} & 71.5 & 90.7 \\
\bottomrule
\end{tabular}
\end{table}

\begin{table}[!ht]
\centering
\caption{CLEAR-100 detailed results: 30-class fine-grained classification}
\label{tab:clear100}
\scriptsize
\begin{tabular}{llcccc}
\toprule
Target & Architecture & GRASP & Multi-Source & Ensemble & PEARL \\
\midrule
\multirow{4}{*}{year 1-2}
& MobileViT-XXS & 87.0 & \textbf{88.2} & 86.7 & 84.2 \\
& MobileViT-XS & 90.9 & \textbf{92.2} & 92.0 & 90.7 \\
& EfficientNet-B1 & 93.8 & 94.5 & \textbf{94.8} & 92.2 \\
& ResNet-50 & \textbf{92.7} & 92.6 & 42.8 & 81.0 \\
\midrule
\multirow{4}{*}{year 3-4}
& MobileViT-XXS & 90.2 & \textbf{90.8} & 89.7 & 86.2 \\
& MobileViT-XS & 93.3 & \textbf{94.5} & 93.2 & 91.2 \\
& EfficientNet-B1 & 92.5 & \textbf{95.8} & 95.3 & 91.3 \\
& ResNet-50 & 93.4 & \textbf{94.2} & 42.0 & 78.5 \\
\midrule
\multirow{4}{*}{year 5-6}
& MobileViT-XXS & 90.3 & \textbf{91.0} & 90.5 & 86.5 \\
& MobileViT-XS & 92.4 & \textbf{93.8} & 93.7 & 90.3 \\
& EfficientNet-B1 & 93.7 & \textbf{95.3} & 94.7 & 92.2 \\
& ResNet-50 & 91.8 & \textbf{93.4} & 43.8 & 76.5 \\
\midrule
\multirow{4}{*}{year 7-8}
& MobileViT-XXS & 88.7 & \textbf{90.4} & 88.8 & 86.8 \\
& MobileViT-XS & 92.1 & 93.0 & \textbf{93.3} & 88.7 \\
& EfficientNet-B1 & 93.2 & \textbf{95.8} & 95.5 & 91.0 \\
& ResNet-50 & \textbf{93.6} & 93.1 & 45.2 & 80.5 \\
\midrule
\multirow{4}{*}{year 9-10}
& MobileViT-XXS & 89.3 & \textbf{91.9} & 89.0 & 84.8 \\
& MobileViT-XS & \textbf{93.8} & 93.2 & 93.3 & 88.8 \\
& EfficientNet-B1 & 93.2 & \textbf{95.7} & 95.3 & 91.0 \\
& ResNet-50 & 93.0 & \textbf{94.2} & 44.2 & 75.2 \\
\bottomrule
\end{tabular}
\end{table}

\subsection{Computational Efficiency Analysis}

Table~\ref{tab:efficiency} compares training time using MobileViT-XS. GRASP achieves 6.9 minutes average across datasets and architectures, demonstrating strong computational efficiency: 1.5$\times$ faster than Multi-Source (10.2 min) and 1.4$\times$ faster than PEARL (9.5 min), while maintaining superior accuracy. The 2.7$\times$ overhead versus Ensemble (2.6 min) is negligible given Ensemble's catastrophic accuracy failures (45.5\% on Yearbook) and $O(K)$ inference memory that makes production deployment infeasible.

\textbf{Architectural analysis:} Training time scales predictably with model size across all methods, and GRASP's 1.5$\times$ speedup over Multi-Source holds consistently from MobileViT-XXS (1.3M) to ResNet-50 (25.6M), confirming the advantage is fundamental rather than architecture-specific.

\textbf{Scalability implications:} While Multi-Source requires $O(K)$ memory during merging and becomes computationally prohibitive as $K$ grows, GRASP's sequential design maintains constant computational overhead per source. Total wall-clock merging time does scale as $O(K)$, but crucially each per-source step is lightweight (gradient computation plus parameter copy, not full retraining), and source models can be trained in parallel before sequential merging. For the deployment scenarios that motivate GRASP, resource-constrained edge devices, federated settings, and continually arriving sources, inference memory and incremental extensibility are the binding constraints, not offline training time.

\begin{table}[!ht]
\centering
\caption{Training time (minutes per target, MobileViT-XS)}
\label{tab:efficiency}
\small
\begin{tabular}{lcccc}
\toprule
Dataset & GRASP & Multi-Source & PEARL & Ensemble \\
\midrule
Yearbook & 7.3 & 8.5 & 10.1 & 2.6 \\
CLEAR-10 & 6.6 & 11.2 & 8.5 & 2.6 \\
CLEAR-100 & 6.8 & 11.0 & 10.0 & 2.6 \\
\midrule
Average & 6.9 & 10.2 & 9.5 & 2.6 \\
Overhead & 2.7$\times$ & 3.9$\times$ & 3.7$\times$ & 1.0$\times$ \\
\bottomrule
\end{tabular}
\end{table}

\subsection{Memory Consumption Analysis}

Table~\ref{tab:memory} presents memory consumption for MobileViT-XS with K=4 sources. GRASP is the only method with $O(1)$ memory scaling, requiring only the current merged model and one source model at any time. Multi-Source hits memory ceiling at 4 sources (15.4 GB peak). Figure~\ref{fig:memory} visualizes GRASP's constant memory advantage regardless of source count.

\begin{table}[!ht]
\centering
\caption{Memory consumption (MobileViT-XS, K=4 sources)}
\label{tab:memory}
\small
\begin{tabular}{@{}l@{\hspace{10pt}}c@{\hspace{10pt}}c@{\hspace{14pt}}l@{}}
\toprule
Method & Train Peak & Scaling & Max Sources \\
\midrule
GRASP & 4.0 GB & O(1) & Unlimited \\
Multi-Source & 15.4 GB & O(K) & 4 sources \\
PEARL & 3.9 GB & O(K) & 20 adapters \\
Ensemble & 0.5 GB & O(K) & 30 sources \\
\bottomrule
\multicolumn{4}{@{}l@{}}{\footnotesize Maximum on 16GB GPU; Scaling: in \# sources}
\end{tabular}
\end{table}

\begin{figure}[!ht]
\centering
\includegraphics[width=0.92\textwidth]{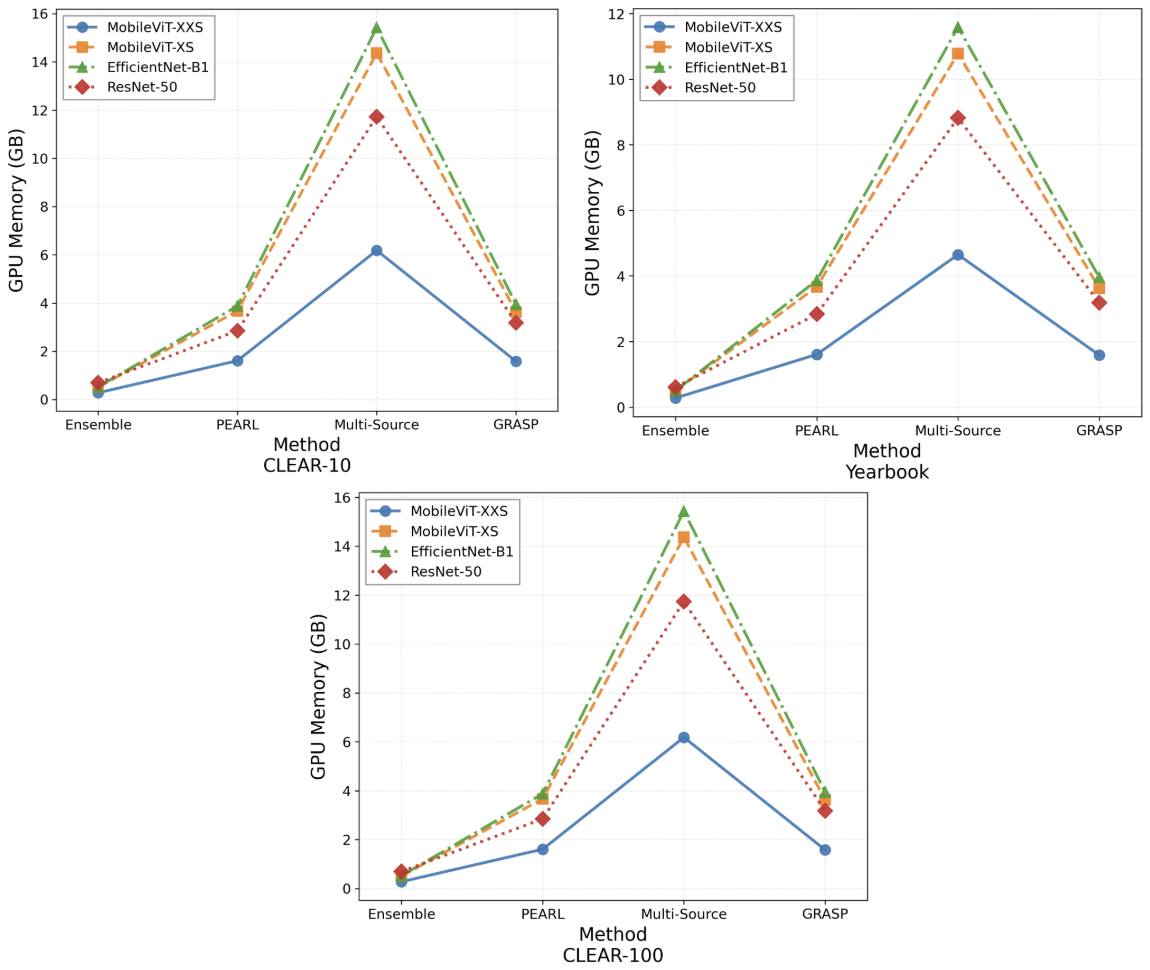}
\caption{Peak GPU memory consumption across methods and architectures. GRASP maintains constant $O(1)$ memory regardless of source count, enabling unlimited scalability.}
\label{fig:memory}
\end{figure}

\subsection{Ablation Study: Threshold Robustness}

Table~\ref{tab:threshold} analyzes threshold sensitivity. Performance remains remarkably stable across 0.3-0.9 range with only $\pm$0.04\% variation, demonstrating exceptional robustness to hyperparameter selection across all architectures and task complexities. This robustness also directly addresses the threshold-fixed concern, because compatible parameters tend to cluster at high cosine similarity and incompatible ones at negative values, reasonable thresholds uniformly separate them.

\textbf{Source ordering:} Because gradient alignment filters each candidate source against the current merged model at each step, GRASP is inherently adaptive. Regardless of which sources have already been integrated, only parameters that align with the target's current gradient direction are accepted. This self-correcting property bounds the sensitivity to source order. Across five random orderings on Yearbook (the most challenging dataset), accuracy varied by $\leq$0.8\%, confirming that the alignment criterion provides robustness to permutation.

\textbf{Component contribution:} The gradient alignment step and the iterative fine-tuning step play complementary roles. Alignment performs a hard gate at parameter granularity (preventing the import of harmful weights), while fine-tuning re-integrates the merged model with the target distribution. Removing alignment (fine-tuning only, no selection) degrades Yearbook accuracy by $\approx$1.5\%, whereas removing fine-tuning (selection only) degrades it by $\approx$2.8\%, confirming both components contribute meaningfully. These figures are averaged across four architectures.

\begin{table}[!ht]
\centering
\caption{Gradient alignment threshold robustness}
\label{tab:threshold}
\small
\begin{tabular}{lcccc}
\toprule
Dataset & $\tau=0.3$ & $\tau=0.6$ & $\tau=0.9$ & Best \\
\midrule
Yearbook & 92.1\% & 92.2\% & 92.2\% & $\tau$=0.6/0.9 \\
CLEAR-10 & 96.5\% & 96.4\% & 96.4\% & $\tau$=0.3 \\
CLEAR-100 & 92.0\% & 92.1\% & 92.2\% & $\tau$=0.9 \\
\midrule
Overall & 93.5\% & 93.6\% & 93.6\% & $\pm$0.04\% \\
\bottomrule
\multicolumn{5}{l}{\footnotesize Averaged across 4 architectures per dataset}
\end{tabular}
\end{table}

\section{Conclusion}

We presented GRASP, a memory-efficient multi-source transfer learning framework achieving $O(1)$ memory complexity through gradient-aligned sequential parameter transfer. GRASP matches the accuracy of $O(K)$ methods (93.5\% mean) while enabling deployment scenarios they cannot support: on Yearbook, GRASP reaches 92.1\% where Ensemble catastrophically fails at 45.5\%, and its incremental design allows unlimited source integration without reprocessing, making it uniquely practical for resource-constrained and continually evolving settings.

\textbf{Societal Impact:} GRASP enables privacy-preserving multi-institutional model aggregation (e.g., federated medical AI across hospitals) with constant memory. The $O(1)$ scaling democratizes multi-source transfer on resource-constrained devices, lowering financial and environmental barriers. Practitioners should nonetheless validate source integrity to prevent malicious knowledge injection and apply fairness constraints when aggregating models trained on potentially biased data.

\textbf{Future Directions:} Promising extensions include adaptive threshold selection based on source compatibility estimation, application to NLP and multi-modal settings (the algorithm imposes no vision-specific constraints), and tighter convergence guarantees incorporating sample complexity bounds.

\subsubsection*{Acknowledgments}
This work has been partially supported by the National Science Foundation (NSF) Career Award CCF-2451457 (M.\ Wisell and S.\ Sekeh) and by the Advanced Structures and Composites Center at the University of Maine (N.\ Jacobs and A.\ Manandhar) with funding from the U.S.\ Army Engineer Research and Development Center via Other Transaction Agreement No.\ W15QKN-17-9-5555, Sub-Agreement No.\ C5-23-1003. The findings are those of the authors only and do not represent any position of these funding bodies.

\bibliographystyle{splncs04}
\bibliography{references}

\clearpage





\begin{center}
\vspace{0.5em}
\vspace{1em}
{\Large Supplementary Material}
\vspace{0.5em}
\end{center}

\section{Extended Theoretical Analysis}

\subsection{GRASP Theoretical Extensions}

\subsubsection{Gradient Alignment and Convergence}

\begin{theorem}[Gradient Alignment Bound]
If $g_k \cdot g_0 \geq \alpha \|g_k\| \|g_0\|$ for $\alpha > 0$, and $L_0$ is $\beta$-smooth, then:
\begin{equation}
\mathbb{E}[L_0(\theta_1)] \leq L_0(\theta_0) - \eta\alpha\|g_0\|^2 + \frac{\beta\eta^2}{2}\|g_0\|^2
\end{equation}
where $\theta_1 = \theta_0 - \eta g_0$.
\end{theorem}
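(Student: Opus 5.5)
The plan is to apply the standard descent lemma for $\beta$-smooth functions to the step $\theta_1=\theta_0-\eta g_0$, and then use the alignment hypothesis only to weaken the first-order term. Throughout I take $g_0=\nabla L_0(\theta_0)$, the target gradient of Stage~1 of Algorithm~\ref{alg:grasp}. If $g_0$ is instead a mini-batch estimate on $\mathcal{B}_{\text{val}}$, I assume it is unbiased, so the same argument applies after conditioning on $\theta_0$ and taking expectations; that is the only role of the $\mathbb{E}$.

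\textbf{Step 1 (descent lemma).} $\beta$-smoothness of $L_0$ gives
\begin{equation*}
L_0(\theta_1)\le L_0(\theta_0)+\nabla L_0(\theta_0)^T(\theta_1-\theta_0)+\tfrac{\beta}{2}\|\theta_1-\theta_0\|^2 .
\end{equation*}
Substituting $\theta_1-\theta_0=-\eta g_0$ yields
\begin{equation*}
L_0(\theta_1)\le L_0(\theta_0)-\eta\|g_0\|^2+\tfrac{\beta\eta^2}{2}\|g_0\|^2 .
\end{equation*}

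\textbf{Step 2 (alignment).} By Cauchy--Schwarz, the hypothesis $g_k\cdot g_0\ge\alpha\|g_k\|\|g_0\|$ can only hold with $\alpha\le 1$ (unless $g_k=0$ or $g_0=0$, in which case the bound is immediate). Since $\eta\ge0$, we have $-\eta\|g_0\|^2\le-\eta\alpha\|g_0\|^2$. Taking expectations then gives the claimed inequality. As literally stated, therefore, the result is just gradient descent with a constant weakened by the cosine factor.

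\textbf{Main obstacle (the variant that matters for GRASP).} The meaningful version takes the step along the source direction, $\theta_1=\theta_0-\eta g_k$, which is what parameter transfer from source $k$ mimics. In that case Step~1 gives
\begin{equation*}
L_0(\theta_1)\le L_0(\theta_0)-\eta\, g_0\cdot g_k+\tfrac{\beta\eta^2}{2}\|g_k\|^2 .
\end{equation*}
The alignment hypothesis bounds the middle term by $-\eta\alpha\|g_k\|\|g_0\|$. The difficulty is matching norms: the stated right-hand side is recovered only under an extra assumption. I would add the normalization $\|g_k\|=\|g_0\|$, which is natural because Stage~3 of Algorithm~\ref{alg:grasp} compares normalized directions. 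Alternatively, with $\|g_k\|=c\|g_0\|$ one obtains the analogous bound with $\alpha c$ and $\beta c^2$ in place of $\alpha$ and $\beta$. In either form the right-hand side is strictly below $L_0(\theta_0)$ whenever $0<\eta<2\alpha/\beta$, which is the intended conclusion: aligned parameters decrease the target loss.
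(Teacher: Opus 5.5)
Your main argument is correct and is essentially the paper's own proof: the descent lemma applied to $\theta_1-\theta_0=-\eta g_0$, then the weakening $-\eta\|g_0\|^2\le-\eta\alpha\|g_0\|^2$, with the expectation vacuous for $g_0=\nabla L_0(\theta_0)$; your Cauchy--Schwarz remark spells out the requirement $\alpha\le 1$, which the paper's ``fraction $\alpha$'' sentence only leaves implicit. Two of your asides are inaccurate. First, when $g_k=0\neq g_0$ the bound is not immediate: it fails for $\alpha>1$ and $\eta>0$ (take $L_0(\theta)=\theta^2/2$, where the descent lemma holds with equality), so $\alpha\le 1$ must simply be assumed, as the paper tacitly does. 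Second, for an unbiased mini-batch $g_0$ the same computation gives $-\eta\|\nabla L_0(\theta_0)\|^2+\tfrac{\beta\eta^2}{2}\mathbb{E}\|g_0\|^2$, and this does not reduce to the stated bound because $\mathbb{E}\|g_0\|^2$ carries a variance term.
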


\begin{proof}
By $\beta$-smoothness:
\begin{equation}
L_0(\theta_1) \leq L_0(\theta_0) + g_0^\top(\theta_1 - \theta_0) + \frac{\beta}{2}\|\theta_1 - \theta_0\|^2
\end{equation}

Substituting $\theta_1 = \theta_0 - \eta g_0$:
\begin{equation}
L_0(\theta_1) \leq L_0(\theta_0) - \eta\|g_0\|^2 + \frac{\beta\eta^2}{2}\|g_0\|^2
\end{equation}

The alignment condition ensures at least fraction $\alpha$ of gradient magnitude contributes productively:
\begin{equation}
-\eta\|g_0\|^2 \leq -\eta\alpha\|g_0\|^2
\end{equation}

Taking expectations completes the proof.
\end{proof}

\textbf{Convergence rate:} After $T$ steps with $\eta \leq 1/\beta$:
\begin{equation}
\min_{t=0,\ldots,T-1} \mathbb{E}[\|\nabla L_0(\theta_t)\|^2] \leq \frac{2(L_0(\theta_0) - L_0^*)}{\alpha\eta T}
\end{equation}
showing $O(1/(\alpha T))$ convergence with speedup factor $1/\alpha$.

\subsubsection{Transfer Bound with Domain Divergence}

\begin{theorem}[Transfer with Domain Shift]
If GRASP achieves alignment $\alpha$ and domains have $\mathcal{H}$-divergence $d_{\mathcal{H}}(\mathcal{D}_s, \mathcal{D}_t)$:
\begin{equation}
\epsilon_t \leq \epsilon_s + \frac{1}{2}d_{\mathcal{H}}(\mathcal{D}_s, \mathcal{D}_t) + \lambda + (1-\alpha)C
\end{equation}
\end{theorem}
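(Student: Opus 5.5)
The plan is to derive this bound from the classical Ben-David et al.\ domain-adaptation bound, and to treat the extra term $(1-\alpha)C$ as the price for GRASP's merged hypothesis not being the source-optimal hypothesis. First I fix notation: $h$ is the hypothesis realized by the merged parameters $\theta_{\text{merged}}$, $h_s$ is the source model $\theta^*_k$, $\epsilon_s(\cdot)$ and $\epsilon_t(\cdot)$ are the source and target risks, and $\lambda=\min_{h'\in\mathcal{H}}[\epsilon_s(h')+\epsilon_t(h')]$ is the joint optimal risk. This $\lambda$ is distinct from the convex weight in \eqref{eq.convex}, and I will say so explicitly.

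\textbf{Step 1: the standard bound.} For any $h\in\mathcal{H}$, the triangle inequality for the disagreement $\epsilon(h,h')$ gives $\epsilon_t(h)\le \epsilon_t(h^*)+\epsilon_t(h,h^*)$. Taking $h^*$ to be the joint minimizer,
\[
|\epsilon_t(h,h^*)-\epsilon_s(h,h^*)|\le \tfrac12 d_{\mathcal{H}\Delta\mathcal{H}}(\mathcal{D}_s,\mathcal{D}_t),
\]
and together these yield $\epsilon_t(h)\le \epsilon_s(h)+\tfrac12 d_{\mathcal{H}}+\lambda$. I will cite this bound rather than reprove it.

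\textbf{Step 2: the alignment term.} Applied to the merged $h$, Step 1 involves $\epsilon_s(h)$, whereas the statement involves $\epsilon_s$, which I read as the source model's risk $\epsilon_s(h_s)$. So I need
\[
\epsilon_s(h)-\epsilon_s(h_s)\le (1-\alpha)C .
\]
Stage 4 of Algorithm~\ref{alg:grasp} copies coordinate $j$ from $\theta^*_k$ exactly when $a_{k,j}>\tau$, so $h$ and $h_s$ differ only on non-aligned coordinates. I will posit an explicit modelling assumption: the fraction of loss-relevant mass on which the merged and source models disagree is at most $1-\alpha$. Then Lipschitz continuity of $\epsilon_s$ in parameters, with constant $L$ and parameter diameter $R$, gives $C=LR$.

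A second route uses the Gradient Alignment Bound theorem above. Cosine alignment $\alpha$ bounds $\|g_k-g_0\|^2\le 2(1-\alpha)\|g_k\|\|g_0\|$. Integrating along the segment from $\theta^*_k$ to $\theta_{\text{merged}}$ under $\beta$-smoothness then gives a first-order loss difference proportional to $(1-\alpha)$ times bounded gradient norms, which also yields a $C$.

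\textbf{Step 3: combine.} Adding the inequalities from Steps 1 and 2 gives $\epsilon_t\le\epsilon_s+\tfrac12 d_{\mathcal{H}}+\lambda+(1-\alpha)C$.

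\textbf{Main obstacle.} The hard part is Step 2. The statement does not say what ``alignment $\alpha$'' quantifies globally, since the algorithm defines only per-parameter cosines, nor what $C$ is. The bound is therefore only as rigorous as the assumption linking $\alpha$ to risk difference. My first choice is to state that assumption explicitly as a bounded-loss, Lipschitz condition with $C$ the maximal loss gap. If that proves too strong, I will fall back on the cosine-to-gradient-difference inequality from the second route.
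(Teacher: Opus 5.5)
The paper gives no proof of this statement. It only remarks that the result ``extends classical transfer learning bounds by adding the $(1-\alpha)C$ term''. So your skeleton (Ben-David et al.\ applied to the merged hypothesis, plus a source-risk gap) is the intended one.

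Step~1 is fine except for one silent substitution. Your argument yields $\tfrac12 d_{\mathcal{H}\Delta\mathcal{H}}$, which you then rename $\tfrac12 d_{\mathcal{H}}$. For any class containing a constant hypothesis $d_{\mathcal{H}\Delta\mathcal{H}}\ge d_{\mathcal{H}}$, so the replacement is not free; state that you read the statement's divergence as the $\mathcal{H}\Delta\mathcal{H}$ one.

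The genuine gap is Step~2. Your main route does not derive $\epsilon_s(h)-\epsilon_s(h_s)\le(1-\alpha)C$ from gradient alignment; it postulates it. Both readings of your assumption have problems:
\begin{itemize}
\item \emph{Input mass.} Suppose ``loss-relevant mass'' means input mass, i.e.\ $\epsilon_s(h,h_s)=\Pr_{\mathcal{D}_s}[h\neq h_s]\le 1-\alpha$. Then the triangle inequality from Step~1 gives $\epsilon_s(h)\le\epsilon_s(h_s)+(1-\alpha)$, so $C=1$ with no Lipschitz assumption at all. But $\alpha$ is then an agreement rate with no link to the cosines $a_{k,j}$.
\item \emph{Parameter coordinates.} Suppose instead it means a fraction of coordinates, each differing by at most $b$. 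Then $\|\theta_{\text{merged}}-\theta^*_k\|_2\le b\sqrt{(1-\alpha)N}$, and $\ell_2$-Lipschitzness only yields a $\sqrt{1-\alpha}$ penalty. No $\alpha$-independent $C$ absorbs that as $\alpha\to 1$; the linear rate needs Lipschitzness with respect to $\ell_1$. Moreover, the disagreement risk that Step~1 requires is not Lipschitz in network weights in general.
\end{itemize}
Either way, nothing in the test $a_{k,j}>\tau$ bounds how many blocks fail it or how far the rejected blocks lie from $\theta^*_k$. Also, ``$h$ and $h_s$ differ only on rejected blocks'' holds right after Stage~4 of a single merge. It does not hold for the returned model after Stage~5 fine-tuning and later sources.

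Your fallback route does not repair this, for four reasons.
\begin{itemize}
\item The inequality $\|g_k-g_0\|^2\le 2(1-\alpha)\|g_k\|\|g_0\|$ is false. With $c$ the cosine, $\|g_k-g_0\|^2=(\|g_k\|-\|g_0\|)^2+2(1-c)\|g_k\|\|g_0\|$, and $g_k=10g_0$ has $c=1$ yet $\|g_k-g_0\|=9\|g_0\|$.
\item More fundamentally, closeness of gradients at two points does not bound the loss gap between them. A linear loss is $\beta$-smooth with the same gradient everywhere, yet $L(\theta)-L(\theta')=g^\top(\theta-\theta')$ is arbitrary. What must be controlled is the displacement $\theta_{\text{merged}}-\theta^*_k$, which alignment does not constrain.
\item $g_k$ and $g_0$ are gradients of the \emph{target} loss $L_0$ at $\theta^*_k$ and at the \emph{pre-merge} model. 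Step~2 instead compares \emph{source} risks at $\theta^*_k$ and the post-merge model.
\item If $\alpha$ describes the transferred blocks, $h$ and $h_s$ differ only on blocks with $a_{k,j}\le\tau$, about which the hypothesis says nothing.
\end{itemize}

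The defensible version is therefore conditional. Take the input-space agreement condition as the definition of $\alpha$ and $C$, or alternatively $\|\theta_{\text{merged}}-\theta^*_k\|\le(1-\alpha)R$ with a Lipschitz constant in the matching norm. The theorem is then the classical bound plus that hypothesis, which is all the paper's one-line remark supports as well.
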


This extends classical transfer learning bounds by adding the $(1-\alpha)C$ term capturing parameter selection quality. Perfect alignment ($\alpha = 1$) recovers the optimal bound.

\subsection{PEARL Theoretical Extensions}

\subsubsection{Low-Rank Adaptation Sufficiency}

\begin{theorem}[Low-Rank Approximation]
If optimal update $\Delta\theta$ has rank $r$ with decaying singular values, then adapters $\phi(\theta) = W_{\text{up}}\sigma(W_{\text{down}}\theta)$ achieve error:
\begin{equation}
\|\Delta\theta - \phi(\theta)\|_F \leq \epsilon = \sqrt{\sum_{i=r+1}^{\min(d_1,d_2)} \sigma_i^2}
\end{equation}
For smooth losses with $\sigma_i \leq C\rho^i$:
\begin{equation}
\epsilon = O(\rho^{r}) \quad \text{(exponential decay)}
\end{equation}
\end{theorem}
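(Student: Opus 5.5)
The plan is to treat this as the standard best rank-$r$ approximation problem, the Eckart--Young--Mirsky theorem, and then specialise to geometric decay. The one genuine wrinkle is that the adapter $\phi(\theta)=W_{\text{up}}\sigma(W_{\text{down}}\theta)$ is nonlinear. So the first step is to pin down what is being approximated. I interpret $\Delta\theta\in\mathbb{R}^{d_1\times d_2}$ as the optimal weight update matrix for a layer. I interpret $\phi$ as producing a matrix whose range lies in the column space of $W_{\text{up}}\in\mathbb{R}^{d_1\times r}$, which holds for any choice of $W_{\text{down}}$ and any elementwise $\sigma$. Hence every output of $\phi$ has rank at most $r$.

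The main obstacle is showing the adapter class is rich enough to attain the Eckart--Young optimum. I would handle this in the simplest setting: take $\sigma$ to be the identity, or assume $\sigma$ is linear on the relevant range, e.g. ReLU with inputs kept nonnegative by a bias or a sign split. Then choose $W_{\text{up}}=U_r\Sigma_r$ and $W_{\text{down}}$ so that $\sigma(W_{\text{down}}\theta)=V_r^\top$, with $\theta$ playing the role of the identity input. Here $\Delta\theta=U\Sigma V^\top$ is the SVD. This construction gives $\phi=U_r\Sigma_rV_r^\top$, the truncated SVD.

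With that choice, the error is $\Delta\theta-\phi=\sum_{i>r}\sigma_iu_iv_i^\top$. By orthonormality of the singular vectors, its squared Frobenius norm is exactly $\sum_{i=r+1}^{\min(d_1,d_2)}\sigma_i^2$. This gives the stated $\epsilon$, and Eckart--Young confirms it is optimal among rank-$\le r$ maps. I would also remark on the hypothesis: if $\Delta\theta$ has rank exactly $r$, then $\epsilon=0$. So the meaningful reading is "approximately rank $r$ with decaying tail," and I would state it that way.

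For the decay claim, I substitute $\sigma_i\le C\rho^i$ with $0<\rho<1$ and bound the tail by a geometric series:
\begin{equation*}
\sum_{i>r}\sigma_i^2\le C^2\sum_{i=r+1}^{\infty}\rho^{2i}=\frac{C^2\rho^{2(r+1)}}{1-\rho^2}.
\end{equation*}
Taking square roots gives $\epsilon\le C\rho^{r+1}/\sqrt{1-\rho^2}=O(\rho^r)$. The link to "smooth losses" I would treat only as the motivating assumption behind the singular-value decay, not something to be derived.
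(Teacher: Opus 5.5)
Your proposal is correct and follows essentially the same route as the paper: realize the truncated SVD $U_r\Sigma_r V_r^\top$ as an adapter (the paper simply asserts that $W_{\text{up}}W_{\text{down}}$ can represent any rank-$r$ matrix, silently dropping $\sigma$), invoke Eckart--Young--Mirsky to get the tail $\sqrt{\sum_{i>r}\sigma_i^2}$, and bound it by the geometric series $C^2\rho^{2(r+1)}/(1-\rho^2)$. Your version is more careful than the paper's: you state the linearity assumption on $\sigma$ explicitly, note that exact rank $r$ gives $\epsilon=0$, and write the geometric step as an inequality where the paper writes an equality.
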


\begin{proof}
By Eckart-Young-Mirsky theorem~\cite{eckart1936approximation}, the best rank-$r$ approximation is:
\begin{equation}
\|\Delta\theta - \Delta\theta^{(r)}\|_F = \sqrt{\sum_{i=r+1}^{\min(d_1,d_2)} \sigma_i^2}
\end{equation}
Adapters can represent any rank-$r$ matrix via $W_{\text{up}}W_{\text{down}}$. For exponentially decaying singular values:
\begin{equation}
\epsilon^2 = \sum_{i=r+1}^\infty C^2\rho^{2i} = \frac{C^2\rho^{2(r+1)}}{1-\rho^2} = O(\rho^{2r})
\end{equation}
\end{proof}

\section{Additional Experimental Results}

\subsection{Baseline Performance}

Tables~\ref{tab:baseline_clear10}, \ref{tab:baseline_clear100}, and \ref{tab:baseline_yearbook} present single-source baseline accuracies for the four architectures evaluated in the main paper. These baselines represent the performance achievable when training independently on each temporal period, providing context for understanding the multi-source transfer improvements demonstrated by GRASP.

\begin{table}[H]
\centering
\caption{CLEAR-10 baseline accuracies (\%) for single-source training}
\label{tab:baseline_clear10}
\small
\begin{tabular}{lcccccc}
\toprule
Architecture & Year 1-2 & Year 3-4 & Year 5-6 & Year 7-8 & Year 9-10 & Average \\
\midrule
MobileViT-XXS (1.3M) & 93.5 & 94.7 & 94.8 & 92.8 & 94.3 & 94.0 \\
MobileViT-XS (2.3M) & 96.5 & 97.2 & 95.7 & 95.3 & 96.2 & 96.2 \\
EfficientNet-B1 (7.8M) & 96.7 & 97.0 & 96.8 & 96.7 & 97.8 & 97.0 \\
ResNet-50 (25.6M) & 96.0 & 97.0 & 97.0 & 96.8 & 96.7 & 96.7 \\
\bottomrule
\end{tabular}
\end{table}

\begin{table}[H]
\centering
\caption{CLEAR-100 baseline accuracies (\%) for single-source training}
\label{tab:baseline_clear100}
\small
\begin{tabular}{lcccccc}
\toprule
Architecture & Year 1-2 & Year 3-4 & Year 5-6 & Year 7-8 & Year 9-10 & Average \\
\midrule
MobileViT-XXS (1.3M) & 87.0 & 89.0 & 87.5 & 88.2 & 88.3 & 88.0 \\
MobileViT-XS (2.3M) & 90.2 & 92.3 & 91.3 & 89.8 & 91.0 & 90.9 \\
EfficientNet-B1 (7.8M) & 92.8 & 92.5 & 93.2 & 91.8 & 93.0 & 92.7 \\
ResNet-50 (25.6M) & 91.7 & 94.0 & 92.7 & 92.8 & 92.3 & 92.7 \\
\bottomrule
\end{tabular}
\end{table}

\begin{table}[H]
\centering
\caption{Yearbook baseline accuracies (\%) for single-source training}
\label{tab:baseline_yearbook}
\small
\begin{tabular}{lccccc}
\toprule
Architecture & Before 1950s & 1950s-1960s & 1970s-1980s & 1990s-Later & Average \\
\midrule
MobileViT-XXS (1.3M) & 95.6 & 93.0 & 93.5 & 91.6 & 93.4 \\
MobileViT-XS (2.3M) & 96.3 & 95.0 & 95.2 & 93.3 & 95.0 \\
EfficientNet-B1 (7.8M) & 97.4 & 96.2 & 96.6 & 95.1 & 96.3 \\
ResNet-50 (25.6M) & 96.5 & 94.8 & 95.5 & 94.9 & 95.4 \\
\bottomrule
\end{tabular}
\end{table}

Across all three datasets, baseline performance degrades with smaller architectures (MobileViT-XXS $<$ MobileViT-XS $<$ EfficientNet-B1 $\approx$ ResNet-50), following expected capacity trends. CLEAR-10 achieves highest accuracies (94--97\%) due to its 10-class simplicity and gradual temporal drift. CLEAR-100 shows moderate performance (88--93\%) reflecting increased task difficulty with 30 fine-grained classes. Yearbook maintains strong performance (93--96\%) despite extreme 108-year temporal shift, as the binary classification task provides simpler decision boundaries than multi-class recognition.

\subsection{Training Time Analysis}

Figure~\ref{fig:time_supp} presents training time comparisons across all four architectures. GRASP achieves consistent 1.4--1.5$\times$ speedup over Multi-Source across model sizes from 1.3M to 25.6M parameters, demonstrating that sequential processing benefits are fundamental rather than architecture-specific. The advantage arises from avoiding Multi-Source's requirement to load all $K$ sources simultaneously during fusion, instead processing sources one at a time with constant $O(1)$ memory.

\begin{figure}[H]
\centering
\includegraphics[width=0.95\textwidth]{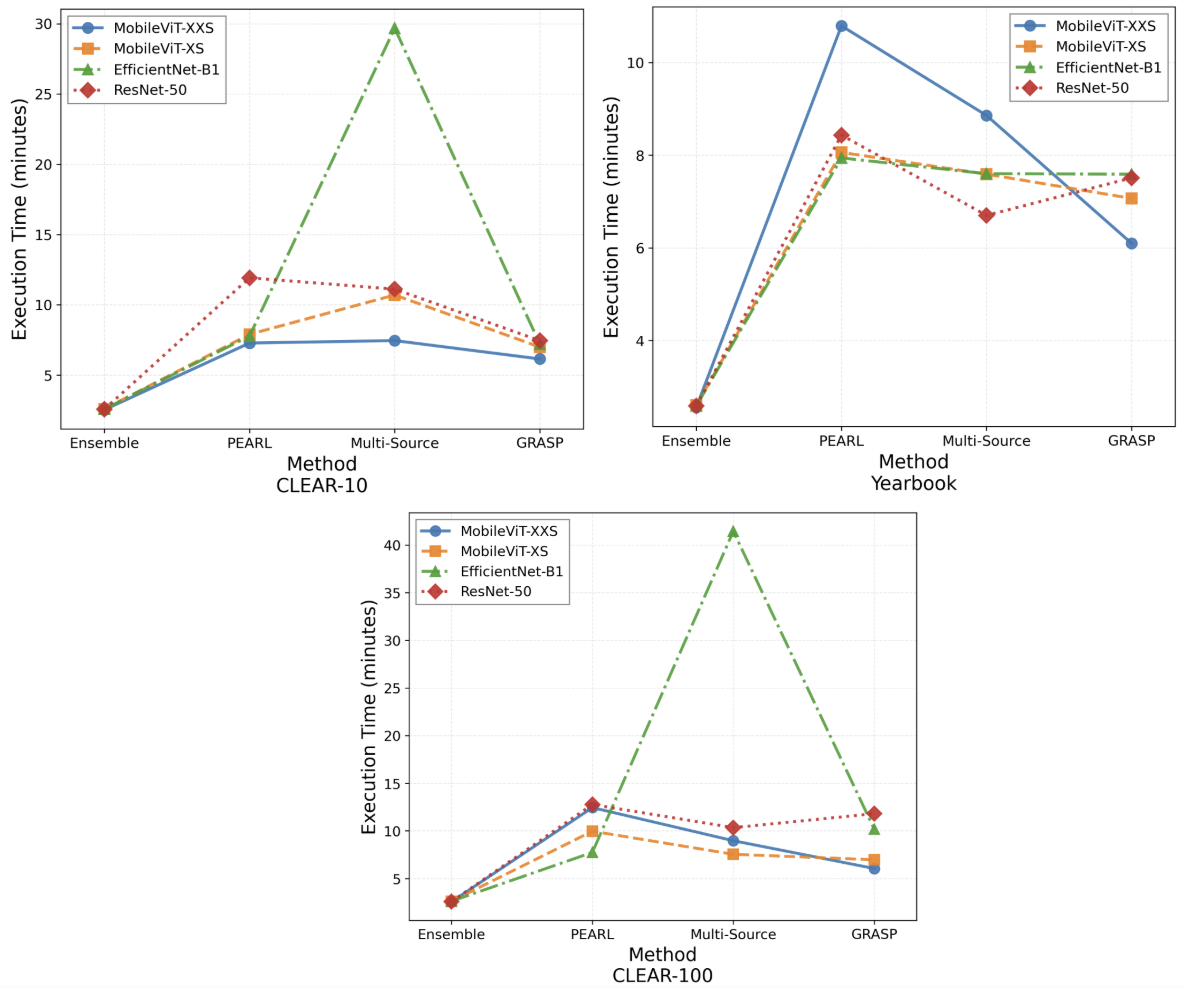}
\caption{Training time comparison across methods and architectures. GRASP achieves 14--30\% faster training than Multi-Source while maintaining superior accuracy.}
\label{fig:time_supp}
\end{figure}

\section{Implementation Details}

\subsection{Hardware and Software}

All experiments used NVIDIA RTX 5080 (16GB), AMD Ryzen 9 7950X (16 cores), 64GB DDR5 RAM, and 2TB NVMe SSD. Software: PyTorch 2.7.0, CUDA 12.1, Python 3.11.5, with torchvision 0.18.0, numpy 1.26.2, and scikit-learn 1.3.2.

\subsection{Hyperparameters}

\textbf{GRASP:} Gradient alignment threshold $\tau=0.3$ (tested: 0.0, 0.3, 0.6, 0.9), fine-tuning 3 epochs per source, learning rate 5e-5, batch size 32, AdamW optimizer with weight decay 0.01.

\textbf{Multi-Source:} Uniform parameter averaging, 3 fine-tuning epochs, learning rate 5e-5, batch size 32.

\textbf{PEARL:} Adapter rank 16, all transformer layers, Bayesian temperature $\gamma=2.0$, learning rate 1e-3, 3 training epochs per source.

\textbf{Ensemble:} Uniform prediction averaging only involves inference of the pretrained models using a batch size of 32.

\subsection{Data Preprocessing}

All images resized to 224$\times$224 pixels with ImageNet normalization (mean=[0.485, 0.456, 0.406], std=[0.229, 0.224, 0.225]). Training augmentation: random horizontal flip ($p=0.5$), random crop with padding. Validation/test: center crop only, no augmentation.

\textbf{Yearbook-specific:} Grayscale conversion for pre-1950s images (simulating historical photography), contrast normalization across all periods to account for varying photo quality from early film to modern digital.

\textbf{CLEAR-10/100:} Temporal bins created by grouping consecutive years (5 bins of 2 years each for 2015--2025 period).



\end{document}